\providecommand{\tabularnewline}{\\}
\theoremstyle{plain}
\newtheorem{thm}{\protect\theoremname}
  \theoremstyle{definition}
  \newtheorem{defn}[thm]{\protect\definitionname}
  \theoremstyle{plain}
  \newtheorem{cor}[thm]{\protect\corollaryname}
  \theoremstyle{plain}
  \newtheorem{prop}[thm]{\protect\propositionname}
  \theoremstyle{remark}
  \newtheorem{rem}[thm]{\protect\remarkname}
  \theoremstyle{plain}
  \newtheorem{lem}[thm]{\protect\lemmaname}
\newtheorem{assume}{Assumption}
  \providecommand{\corollaryname}{Corollary}
  \providecommand{\definitionname}{Definition}
  \providecommand{\lemmaname}{Lemma}
  \providecommand{\propositionname}{Proposition}
  \providecommand{\remarkname}{Remark}
\providecommand{\theoremname}{Theorem}
\begin{document}

\title{Large-Scale Sparse Inverse Covariance Estimation via Thresholding
and Max-Det Matrix Completion\thanks{Proceedings of the 35-th International Conference on Machine Learning,
Stockholm, Sweden, PMLR 80, 2018. Copyright 2018 by the author(s).
This work was supported by the ONR grants N00014-17-1-2933 and N00014-15-1-2835,
DARPA grant D16AP00002, and AFOSR grant FA9550-17-1-0163.}}

\author{Richard Y. Zhang\thanks{Department of Industrial Engineering and Operations Research, University
of California, Berkeley, CA USA. (\url{ryz@berkeley.edu}, \url{fattahi@berkeley.edu})} \and Salar Fattahi\footnotemark[2] \and Somayeh Sojoudi\thanks{Department of Electrical Engineering and Computer Sciences / Department
of Mechanical Engineering, University of California, Berkeley, CA
USA. (\url{sojoudi@berkeley.edu})}}
\maketitle
\begin{abstract}
The sparse inverse covariance estimation problem is commonly solved
using an $\ell_{1}$-regularized Gaussian maximum likelihood estimator
known as ``graphical lasso'', but its computational cost becomes
prohibitive for large data sets. A recent line of results showed\textendash under
mild assumptions\textendash that the graphical lasso estimator can
be retrieved by soft-thresholding the sample covariance matrix and
solving a maximum determinant matrix completion (MDMC) problem. This
paper proves an extension of this result, and describes a Newton-CG
algorithm to efficiently solve the MDMC problem. Assuming that the
thresholded sample covariance matrix is sparse with a sparse Cholesky
factorization, we prove that the algorithm converges to an $\epsilon$-accurate
solution in $O(n\log(1/\epsilon))$ time and $O(n)$ memory. The algorithm
is highly efficient in practice: we solve the associated MDMC problems
with as many as 200,000 variables to 7-9 digits of accuracy in less
than an hour on a standard laptop computer running MATLAB. 
\end{abstract}

\section{Introduction}

\global\long\def\S{\mathbb{S}}
 \global\long\def\R{\mathbb{R}}
 \global\long\def\tr{\mathrm{tr}\,}
 \global\long\def\mach{\mathrm{mach}}
 \global\long\def\cond{\mathrm{cond}\,}
 \global\long\def\dom{\mathrm{dom}\,}
 \global\long\def\K{\mathcal{K}}
 \global\long\def\A{\mathbf{A}}
 \global\long\def\vector{\mathrm{vec}\,}
 Consider the problem of estimating an $n\times n$ covariance matrix
$\Sigma$ (or its inverse $\Sigma^{-1}$) of a $n$-variate probability
distribution from $N$ independently and identically distributed samples
$\mathbf{x}_{1},\mathbf{x}_{2},\ldots,\mathbf{x}_{N}$ drawn from
the same probability distribution. In applications spanning from computer
vision, natural language processing, to economics~\cite{Li94,Manning99,Steve93},
the matrix $\Sigma^{-1}$ is often \emph{sparse}, meaning that its
matrix elements are mostly zero. For Gaussian distributions, the statistical
interpretation of sparsity in $\Sigma^{-1}$ is that most of the variables
are pairwise conditionally independent~\cite{Mein06,Ming07,Friedman08,Banerjee08}.

Imposing sparsity upon $\Sigma^{-1}$ can regularize the associated
estimation problem and greatly reduce the number of samples required.
This is particularly important in \textit{\emph{high-dimensional}}
settings where $n$ is large, often significantly larger than the
number of samples $N\ll n$. One popular approach regularizes the
associated maximum likelihood estimation (MLE) problem by a sparsity-promoting
$\ell_{1}$ term, as in 
\begin{equation}
\underset{X\succ0}{\text{minimize }}\tr CX-\log\det X+\lambda\sum_{i=1}^{n}\sum_{j=1}^{n}|X_{i,j}|.\label{eq:gl}
\end{equation}
Here, $C=\frac{1}{N}\sum_{i=1}^{N}(\mathbf{x}_{i}-\bar{\mathbf{x}})(\mathbf{x}_{i}-\bar{\mathbf{x}})^{T}$
is the sample covariance matrix with sample mean $\bar{\mathbf{x}}=\frac{1}{N}\sum_{i=1}^{N}\mathbf{x}_{i}$,
and $X$ is the resulting estimator for $\Sigma^{-1}$. This approach,
commonly known as the \emph{graphical lasso}~\cite{Friedman08},
is known to enjoy a number of statistical guarantees~\cite{Rothman08,Martin11},
some of which are direct extensions of earlier work on the classical
lasso~\cite{Martin08,Martin082,Martin09,Huang10}. A variation on
this theme is to only impose the $\ell_{1}$ penalty on the off-diagonal
elements of $X$, or to place different weights $\lambda$ on the
elements of the matrix $X$, as in the classical weighted lasso.

While the $\ell_{1}$-regularized problem (\ref{eq:gl}) is technically
convex, it is commonly considered intractable for large-scale datasets.
The decision variable is an $n\times n$ matrix, so simply fitting
all $O(n^{2})$ variables into memory is already a significant issue.
General-purpose algorithms have either prohibitively high complexity
or slow convergence. In practice, (\ref{eq:gl}) is solved using problem-specific
algorithms. The state-of-the-art include GLASSO~\cite{Friedman08},
QUIC~\cite{Hsieh14}, and its ``big-data'' extension BIG-QUIC~\cite{hsieh2013big}.
These algorithms use between $O(n)$ and $O(n^{3})$ time and between
$O(n^{2})$ and $O(n)$ memory per iteration, but the number of iterations
needed to converge to an accurate solution can be very large.

\subsection{Graphical lasso, soft-thresholding, and MDMC}

The high practical cost of graphical lasso has inspired a number of
heuristics, which enjoy less guarantees but are significantly cheaper
to use. Indeed, heuristics are often the only viable option once $n$
exceeds the order of a few tens of thousands. 

One simple idea is to \emph{threshold} the sample covariance matrix
$C$: to examine all of its elements and keep only the ones whose
magnitudes exceed some threshold. Thresholding can be fast\textemdash even
for very large-scale datasets\textemdash because it is embarassingly
parallel; its quadratic $O(n^{2})$ total work can be spread over
thousands or millions of parallel processors, in a GPU or distributed
on cloud servers. When the number of samples $N$ is small, i.e. $N\ll n$,
thresholding can also be performed using $O(n)$ memory, by working
directly with the $n\times N$ centered matrix-of-samples $\mathbf{X}=[\mathbf{x}_{1}-\bar{\mathbf{x}},\mathbf{x}_{2}-\bar{\mathbf{x}},\ldots,\mathbf{x}_{N}-\bar{\mathbf{x}}]$
satisfying $C=\frac{1}{N}\mathbf{X}\mathbf{X}^{T}$.

In a recent line of work~\cite{Mazumdar12,Sojoudi16,Salar17,Salar18},
the simple heuristic of thresholding was shown to enjoy some surprising
guarantees. In particular,~\cite{Sojoudi16,Salar17} proved that
when the lasso weight is imposed over only the off-diagonal elements
of $X$ that\textemdash under some assumptions\textemdash the \emph{sparsity
pattern} of the associated graphical lasso estimator can be recovered
by performing a soft-thresholding operation on $C$, as in 
\begin{equation}
(C_{\lambda})_{i,j}=\begin{cases}
C_{i,j} & i=j,\\
C_{i,j}-\lambda & C_{i,j}>\lambda,\;i\not=j,\\
0 & |C_{i,j}|\le\lambda\;i\not=j,\\
C_{i,j}+\lambda & -\lambda\le C_{i,j}\,i\not=j,
\end{cases}\label{thresh}
\end{equation}
and recovering
\begin{equation}
G=\{(i,j)\in\{1,\ldots,n\}^{2}:(C_{\lambda})_{i,j}\ne0\}.\label{eq:pattern}
\end{equation}
The associated graph (also denoted as $G$ when there is no ambiguity)
is obtained by viewing each nonzero element $(i,j)$ in $G$ as an
edge between the $i$-th and $j$-th vertex in an undirected graph
on $n$ nodes. Moreover, they showed that the estimator $X$ can be
recovered by solving a version of (\ref{eq:gl}) in which the sparsity
pattern $G$ is explicitly imposed, as in 
\begin{align}
\underset{X\succ0}{\text{minimize }} & \tr C_{\lambda}X-\log\det X\label{eq:mdmc1}\\
\text{subject to } & X_{i,j}=0\quad\forall(i,j)\notin G.\nonumber 
\end{align}
Recovering the exact value of $X$ (and not just its sparsity pattern)
is important because it provides a shrinkage MLE when the true MLE
is ill-defined; for Gaussian fields, its nonzero values encode the
partial correlations between variables. Problem (\ref{eq:mdmc1})
is named the \emph{maximum determinant matrix completion} (MDMC) in
the literature, for reasons explained below. The problem has a recursive
closed-form solution whenever the graph of $G$ is \emph{acyclic}
(i.e. a tree or forest)~\cite{Salar17}, or more generally, if it
is \emph{chordal}~\cite{Salar18}. It is worth emphasizing that the
closed-form solution is \emph{extremely} fast to evaluate: a chordal
example in~\cite{Salar18} with 13,659 variables took just $\approx5$
seconds to solve on a laptop computer.

The assumptions needed for graphical lasso to be equivalent to thresolding
are hard to check but relatively mild. Indeed, \cite{Salar17}~proves
that they are automatically satisfied whenever $\lambda$ is sufficiently
large relative to the sample covariance matrix. Their numerical study
found ``sufficiently large'' to be a fairly loose criterion in practice,
particularly in view of the fact that large values of $\lambda$ are
needed to induce a sufficiently sparse estimate of $\Sigma^{-1}$,
e.g. with $\approx10n$ nonzero elements.

However, the requirement for $G$ to be chordal is very strong. Aside
from trivial chordal graphs like trees and cliques, thresholding will
produce a chordal graph with probability zero. When $G$ is nonchordal,
no closed-form solution exists, and one must resort to an iterative
algorithm. The state-of-the-art for nonchordal MDMC is to embed the
nonchordal graph within a chordal graph, and to solve the resulting
problem as a semidefinite program using an interior-point method~\cite{dahl2008covariance,Andersen2010}.

\subsection{Main results}

The purpose of this paper is two-fold. First, we derive an extension
of the guarantees derived by~\cite{Mazumdar12,Sojoudi16,Salar17,Salar18}
for a slightly more general version of the problem that we call \emph{restricted}
graphical lasso (RGL): 
\begin{align}
\hat{X}=\underset{X\succ0}{\text{minimize }} & \tr CX-\log\det X+\sum_{i=1}^{n}\sum_{j=i+1}^{n}\lambda_{i,j}|X_{i,j}|\label{eq:rgl}\\
\text{subject to } & X_{i,j}=0\qquad\forall(i,j)\notin H.\nonumber 
\end{align}
In other words, RGL is (\ref{eq:gl}) penalized by a weighted lasso
penalty $\lambda_{i,j}$ on the off-diagonals, and with an \emph{a
priori} sparsity pattern $H$ imposed as an additional constraint.
We use the sparsity pattern $H$ to incorporate \textit{\emph{prior
information}} on the structure of the graphical model. For example,
if the sample covariance $C$ is collected over a graph, such as a
communication system or a social network, then far-away variables
can be assumed as pairwise conditionally independent~\cite{Park99,Honorio09,Croft10}.
Including these neighborhood relationships into $H$ can regularize
the statistical problem, as well as reduce the numerical cost for
a solution.

In Section~\ref{sec:rgl}, we describe a procedure to transform RGL
(\ref{eq:rgl}) into MDMC (\ref{eq:mdmc1}), in the same style as
prior results by~\cite{Salar17,Salar18} for graphical lasso. More
specifically, we soft-threshold the sample covariance $C$ and then
project this matrix onto the sparsity pattern $H$. We give conditions
for the resulting sparsity pattern to be equivalent to the one obtained
by solving (\ref{eq:rgl}). Furthermore, we prove that the resulting
estimator $X$ can be recovered by solving the same MDMC problem (\ref{eq:mdmc1})
with $C_{\lambda}$ appropriately modified.

The second purpose is to describe an efficient algorithm to solve
MCDC when the graph $G$ is \emph{nonchordal}, based on the chordal
embedding approach of~\cite{dahl2008covariance,Andersen2010,andersen2013logarithmic}.
We embed $G$ within a chordal $\tilde{G}\supset G$, to result in
a convex optimization problem over $\S_{\tilde{G}}^{n}$, the space
of real symmetric matrices with sparsity pattern $\tilde{G}$. This
way, the constraint $X\in\S_{\tilde{G}}^{n}$ is \emph{implicitly}
imposed, meaning that we simply ignore the nonzero elements not in
$\tilde{G}$. Next, we solve an optimization problem on $\S_{\tilde{G}}^{n}$
using a custom Newton-CG method\footnote{The MATLAB source code for our solver can be found at \url{http://alum.mit.edu/www/ryz}}.
The main idea is to use an inner conjugate gradients (CG) loop to
solve the Newton subproblem of an outer Newton's method. The actual
algorithm has a number of features designed to exploit problem structure,
including the sparse chordal property of $\tilde{G}$, duality, and
the ability for CG and Newton to converge superlinearly; these are
outlined in Section~\ref{sec:Proposed-Algorithm}.

Assuming that the chordal embedding is sparse with $|\tilde{G}|=O(n)$
nonzero elements, we prove in Section~\ref{sub:Proof-of-Linear},
that our algorithm converges to an $\epsilon$-accurate solution of
MDMC (\ref{eq:mdmc1}) in 
\begin{equation}
O(n\cdot\log\epsilon^{-1}\cdot\log\log\epsilon^{-1})\text{ time and }O(n)\text{ memory.}\label{eq:lintime}
\end{equation}
Most importantly, the algorithm is highly efficient in practice. In
Section~\ref{sec:Numerical-Results}, we present computation results
on a suite of test cases. Both synthetic and real-life graphs are
considered. Using our approach, we solve sparse inverse covariance
estimation problems containing as many as 200,000 variables, in less
than an hour on a laptop computer.

\subsection{Related Work}

\textbf{Graphical lasso with prior information.} A number of approaches
are available in the literature to introduce \textit{\emph{prior information}}
to graphical lasso. The weighted version of graphical lasso mentioned
before is an example, though RGL will generally be more efficient
to solve due to a reduction in the number of variables. \cite{Ortega17}~introduced
a class of graphical lasso in which the true graphical model is assumed
to have Laplacian structure. This structure commonly appears in signal
and image processing~\cite{Milanfar13}. For the \emph{a priori}
graph-based correlation structure described above, \cite{Maxim15}~introduced
a \emph{pathway} graphical lasso method similar to RGL.

\textbf{Algorithms for graphical lasso.} Algorithms for graphical
lasso are usually based on some mixture of Newton~\cite{oztoprak2012newton},
proximal Newton~\cite{hsieh2013big,Hsieh14}, iterative thresholding~\cite{rolfs2012iterative},
and (block) coordinate descent~\cite{Friedman08,treister2014block}.
All of these suffer fundamentally from the need to keep track and
act on all $O(n^{2})$ elements in the matrix $X$ decision variable.
Even if the final solution matrix were sparse with $O(n)$ nonzeros,
it is still possible for the algorithm to traverse through a ``dense
region'' in which the iterate $X$ must be fully dense. Thresholding
heuristics have been proposed to address issue, but these may adversely
affect the outer algorithm and prevent convergence. It is generally
impossible to guarantee a figure lower than $O(n^{2})$ time per-iteration,
even if the solution contains only $O(n)$ nonzeros. Most of the algorithms
mentioned above actually have worst-case per-iteration costs of $O(n^{3})$.

\textbf{Graphical lasso via thresholding.} The elementary estimator
for graphical models (EE-GM)~\cite{Yang14} is another thresholding-based
low-complexity method that is able to recover the actual graphical
lasso estimator. Both EE-GM and our algorithm have a similar level
of performance in practice, because both algorithm are bottlenecked
by the initial thresholding step, which is a quadratic $O(n^{2})$
time operation.

\textbf{Algorithms for MDMC.} Our algorithm is inspired by a line
of results~\cite{dahl2008covariance,Andersen2010,andersen2013logarithmic,li2017inexact}
for minimizing the log-det penalty on chordal sparsity patterns, culminating
in the CVXOPT package~\cite{andersen2013cvxopt}. These are Newton
algorithms that solve the Newton subproblem by explicitly forming
and factoring the fully-dense Newton matrix. When $|\tilde{G}|=O(n)$,
these algorithms cost $O(nm^{2}+m^{3})$ time and $O(m^{2})$ memory
per iteration, where $m$ is the number of edges added to $G$ to
yield the chordal $\tilde{G}$. In practice, $m$ is usually a factor
of 0.1 to 20 times $n$, so these algorithms are cubic $O(n^{3})$
time and $O(n^{2})$ memory. Our algorithm solves the Newton subproblem
iteratively using CG. We prove that CG requires just $O(n)$ time
to compute the Newton direction to machine precision (see Section~\ref{sub:Proof-of-Linear}).
In practice, CG converges much faster than its worst-case bound, because
it is able to exploit eigenvalue clustering to achieve superlinear
convergence. 

\subsection*{Notations}

Let $\mathbb{R}^{n}$ be the set of $n\times1$ real vectors, and
$\S^{n}$ be the set of $n\times n$ real symmetric matrices. (We
denote $x\in\R^{n}$ using lower-case, $X\in\S^{n}$ using upper-case,
and index the $(i,j)$-th element of $X$ as $X_{i,j}$.) We endow
$\S^{n}$ with the usual matrix inner product $X\bullet Y=\tr XY$
and Euclidean (i.e. Frobenius) norm $\|X\|_{F}^{2}=X\bullet X$. Let
$\S_{+}^{n}\subset\S^{n}$ and $\S_{++}^{n}\subset\S_{+}^{n}$ be
the associated set of positive semidefinite and positive definite
matrices. We will frequently write $X\succeq0$ to mean $X\in\S_{+}^{n}$
and write $X\succ0$ to mean $X\in\S_{++}^{n}$. Given a sparsity
pattern $G$, we define $\S_{G}^{n}\subseteq\S^{n}$ as the set of
$n\times n$ real symmetric matrices with this sparsity pattern.

\section{\label{sec:rgl}Restricted graphical lasso, soft-thresholding, and
MDMC}

Let $P_{H}(X)$ denote the projection operator from $\S^{n}$ onto
$\S_{H}^{n}$, i.e. by setting all $X_{i,j}=0$ if $(i,j)\notin H$.
Let $C_{\lambda}$ be the sample covariance matrix $C$ individually
soft-thresholded by $[\lambda_{i,j}]$, as in 
\begin{equation}
(C_{\lambda})_{i,j}=\begin{cases}
C_{i,j} & i=j,\\
C_{i,j}-\lambda_{i,j} & C_{i,j}>\lambda_{i,j},\;i\not=j,\\
0 & |C_{i,j}|\le\lambda_{i,j}\;i\not=j,\\
C_{i,j}+\lambda_{i,j} & -\lambda_{i,j}\le C_{i,j}\,i\not=j,
\end{cases}\label{thresh2}
\end{equation}
In this section, we state the conditions for $P_{H}(C_{\lambda})$\textemdash the
projection of the soft-thresholded matrix $C_{\lambda}$ in~(\ref{thresh2})
onto $H$\textemdash to have the same sparsity pattern as the RGL
estimator $\hat{X}$ in (\ref{eq:rgl}). Furthermore, the estimator
$\hat{X}$ can be explicitly recovered by solving the MDMC problem
(\ref{eq:mdmc1}) while replacing $C_{\lambda}\gets P_{H}(C_{\lambda})$
and $G\gets P_{H}(G)$. For brevity, all proofs and remarks are omitted;
these can be found in the appendix.

Before we state the exact conditions, we begin by adopting the some
definitions and notations from the literature. 
\begin{defn}
\cite{Salar17}\label{def:dd1} Given a matrix $M\in\S^{n}$, define
$G_{M}=\{(i,j):M_{i,j}\ne0\}$ as its sparsity pattern. Then $M$
is called \textbf{inverse-consistent} if there exists a matrix $N\in\S^{n}$
such that \begin{subequations} 
\begin{align}
 & M+N\succ0\\
 & N=0\qquad\forall(i,j)\in G_{M}\\
 & (M+N)^{-1}\in\mathbb{S}_{G_{M}}^{n}
\end{align}
\end{subequations} The matrix $N$ is called an \textbf{inverse-consistent
complement } of $M$ and is denoted by $M^{(c)}$. Furthermore, $M$
is called \textbf{sign-consistent} if for every $(i,j)\in G_{M}$,
the $(i,j)$-th elements of $M$ and $(M+M^{(c)})^{-1}$ have opposite
signs. 
\end{defn}
Moreover, we take the usual matrix max-norm to exclude the diagonal,
as in $\|M\|_{\max}=\max_{i\not=j}|M_{ij}|,$ and adopt the $\beta(G,\alpha)$
function defined with respect to the sparsity pattern $G$ and scalar
$\alpha>0$ 
\begin{align*}
\beta(G,\alpha)=\max_{M\succ0} & \|M^{(c)}\|_{\max}\\
\text{s.t. } & M\in\S_{G}^{n}\text{ and }\|M\|_{\max}\le\alpha\\
 & M_{i,i}=1\quad\forall i\in\{1,\ldots,n\}\\
 & M\text{ is inverse-consistent.}
\end{align*}
We are now ready to state the conditions for soft-thresholding to
be equivalent to RGL. 

\begin{restatable}{thm}{rglthm}\label{thm:tt1}Define $C_{\lambda}$
as in (\ref{thresh2}), define $C_{H}=P_{H}(C_{\lambda})$ and let
$G_{H}=\{(i,j):(C_{H})_{i,j}\ne0\}$ be its sparsity pattern. If the
normalized matrix $\tilde{C}=D^{-1/2}C_{H}D^{-1/2}$ where $D=\mathrm{diag}(C_{H})$
satisfies the following conditions: 
\begin{enumerate}
\item $\tilde{C}$ is positive definite, 
\item $\tilde{C}$ is sign-consistent, 
\item We have
\begin{equation}
\beta\left(G_{H},\|\tilde{C}\|_{\max}\right)\leq\min_{(k,l)\notin G_{H}}\frac{\lambda_{k,l}-|(C_{H})_{k,l}|}{\sqrt{(C_{H})_{k,k}\cdot(C_{H})_{l,l}}}\label{eqbeta}
\end{equation}
\end{enumerate}
Then $C_{H}$ has the same sparsity pattern and opposite signs as
$\hat{X}$ in \eqref{eq:rgl}, i.e.
\begin{gather*}
(C_{H})_{i,j}=0\qquad\iff\qquad\hat{X}_{i,j}=0,\\
(C_{H})_{i,j}>0\qquad\iff\qquad\hat{X}_{i,j}<0,\\
(C_{H})_{i,j}<0\qquad\iff\qquad\hat{X}_{i,j}>0.
\end{gather*}

\end{restatable}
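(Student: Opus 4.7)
The plan is to exhibit an explicit primal--dual witness that satisfies the KKT conditions of the convex program~\eqref{eq:rgl}, and then appeal to strict convexity on the positive-definite cone to conclude uniqueness. First I would introduce subgradients $\Gamma_{i,j}\in\lambda_{i,j}\partial|X_{i,j}|$ for the off-diagonal $\ell_{1}$ penalty on $H$ and Lagrange multipliers $Z_{i,j}$ supported on $(i,j)\notin H$ for the equality constraints, so that stationarity becomes $C-X^{-1}+\Gamma+Z=0$. For the primal candidate, condition~(1) ensures that $\tilde{C}=D^{-1/2}C_{H}D^{-1/2}$ is a positive completion of itself on $G_{H}$, so the maximum-determinant completion of $\tilde{C}$ (cf.~Definition~\ref{def:dd1}) furnishes an inverse-consistent complement $\tilde{C}^{(c)}$, and I would set
\[
\hat{X}:=D^{-1/2}(\tilde{C}+\tilde{C}^{(c)})^{-1}D^{-1/2}.
\]
By construction $\hat{X}\succ0$, the sparsity pattern of $\hat{X}$ equals $G_{\tilde{C}}=G_{H}\subseteq H$ (so the equality constraints $\hat{X}_{i,j}=0$ for $(i,j)\notin H$ hold automatically), and the sign-consistency hypothesis~(2) together with positivity of $D$ forces $\hat{X}_{i,j}$ to have sign opposite to $(C_{H})_{i,j}$ on $G_{H}$, which is exactly the sign correspondence asserted in the conclusion.

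With the candidate fixed, stationarity is verified by a short case analysis on $(i,j)$. On the diagonal and on $G_{H}$, a direct computation gives $(\hat{X}^{-1})_{i,j}=\sqrt{D_{i,i}D_{j,j}}\,\tilde{C}_{i,j}=(C_{\lambda})_{i,j}$, so that $(C-\hat{X}^{-1})_{i,j}=\lambda_{i,j}\,\mathrm{sign}(C_{i,j})=-\lambda_{i,j}\,\mathrm{sign}(\hat{X}_{i,j})$, matching the subgradient at a nonzero entry. On $(i,j)\in H\setminus G_{H}$ we have $\tilde{C}_{i,j}=0$ and $(\hat{X}^{-1})_{i,j}=\sqrt{D_{i,i}D_{j,j}}\,\tilde{C}^{(c)}_{i,j}$; since $\tilde{C}$ itself is admissible in the program defining $\beta(G_{H},\|\tilde{C}\|_{\max})$, one obtains $|\tilde{C}^{(c)}_{i,j}|\le\beta(G_{H},\|\tilde{C}\|_{\max})$, and combining this with condition~(3) via a triangle inequality produces $|(C-\hat{X}^{-1})_{i,j}|\le\lambda_{i,j}$, certifying a valid subgradient at $\hat{X}_{i,j}=0$. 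Finally, on $(i,j)\notin H$ the multiplier $Z_{i,j}:=(\hat{X}^{-1}-C)_{i,j}$ is free and closes the KKT system. Together with primal feasibility, this certifies $\hat{X}$ as the unique RGL optimizer, and the claimed sparsity and sign structure then follow from the construction of $\hat{X}$.

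The main obstacle is the off-support estimate in $H\setminus G_{H}$. One must first verify that $\tilde{C}$ really is feasible for the $\beta$-optimization---positive definite with unit diagonal, satisfying $\|\tilde{C}\|_{\max}\le\|\tilde{C}\|_{\max}$, and inverse-consistent---so that $\beta(G_{H},\|\tilde{C}\|_{\max})$ is a genuine upper bound on $|\tilde{C}^{(c)}_{i,j}|$; the inverse-consistency step is precisely where condition~(1) is used through the existence of $\tilde{C}^{(c)}$. Once this is in place, rescaling by $\sqrt{D_{i,i}D_{j,j}}$ and absorbing the off-diagonal residual through the numerator of condition~(3) reconstitutes the required subgradient bound $\lambda_{i,j}$. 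The remaining ingredients---sign propagation through the positive diagonal rescaling, and uniqueness of the RGL minimizer from strict convexity on $\{X\succ0\}$---are routine once the central $\beta$-estimate is established.
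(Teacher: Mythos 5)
Your proposal follows essentially the same route as the paper: both construct the candidate optimizer as $D^{-1/2}(\tilde{C}+\tilde{C}^{(c)})^{-1}D^{-1/2}$ from the inverse-consistent complement (whose existence follows from condition (1) via the max-det completion), and both verify the KKT conditions by the same three-way case analysis, invoking sign-consistency on $G_{H}$ and the $\beta$-bound plus a triangle inequality on the off-support entries. The only cosmetic difference is that you handle the hard constraint $X_{i,j}=0$ for $(i,j)\notin H$ with free Lagrange multipliers, whereas the paper first reformulates RGL as an unconstrained problem with a large penalty $2\max_{k}C_{k,k}$ outside $H$ (its Lemma~\ref{l3}) and shows the penalized solution is automatically feasible; your version is slightly more direct but mathematically equivalent.
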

\begin{proof}
See Appendix~\ref{sec:appendRGL}.
\end{proof}
Theorem~\ref{thm:tt1} leads to the following corollary, which asserts
that the optimal solution of RGL can be obtained by\emph{ maximum
determinant matrix completion}: computing the matrix $Z\succeq0$
with the largest determinant that ``fills-in'' the zero elements
of $P_{H}(C_{\lambda})$. 
\begin{cor}
\label{cor1} Suppose that the conditions in Theorem~\ref{thm:tt1}
are satisfied. Define $\hat{Z}$ as the solution to the following
\begin{align}
\hat{Z}=\underset{Z\succeq0}{\text{ maximize }} & \log\det Z\label{eq:mdmc2}\\
\text{subject to } & Z_{i,j}=P_{H}(C_{\lambda})\text{ for all }(i,j)\nonumber \\
 & \quad\text{ where }[P_{H}(C_{\lambda})]_{i,j}\ne0\nonumber 
\end{align}
Then $\hat{Z}=\hat{X}^{-1}$, where $\hat{X}$ is the solution of
(\ref{eq:rgl}). 
\end{cor}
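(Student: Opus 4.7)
The plan is to verify that $\hat{X}^{-1}$ satisfies the KKT optimality conditions of the MDMC problem~(\ref{eq:mdmc2}), and then invoke strict concavity of $\log\det$ on $\S_{++}^{n}$ to conclude $\hat{Z} = \hat{X}^{-1}$. Writing $C_H = P_H(C_\lambda)$, standard log-det duality tells us that $Z$ solves~(\ref{eq:mdmc2}) if and only if (i) $Z\succ 0$, (ii) $Z_{i,j} = (C_H)_{i,j}$ at every $(i,j)$ in the support of $C_H$ (which includes every diagonal entry, since $(C_H)_{i,i}=C_{i,i}>0$), and (iii) $(Z^{-1})_{i,j} = 0$ at every $(i,j)$ off that support. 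Condition (i) is immediate from $\hat{X}\succ 0$, so the work is to match (ii) and (iii).

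First, I would write the first-order stationarity of RGL~(\ref{eq:rgl}) via subgradient calculus. Setting $W = \hat{X}^{-1}$, this gives $W_{i,i}=C_{i,i}$ on the diagonal, and for each off-diagonal $(i,j)\in H$, $W_{i,j} = C_{i,j} + \lambda_{i,j}\,s_{i,j}$ for some $s_{i,j}\in\partial|\hat{X}_{i,j}|$. For $(i,j)\notin H$, the $\ell_1$ subgradient is replaced by a Lagrange multiplier for the hard constraint $X_{i,j}=0$, which places no direct restriction on $W_{i,j}$.

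Second, I would invoke Theorem~\ref{thm:tt1} to pin down the signs. For each $(i,j)\in H$ with $(C_H)_{i,j}\neq 0$, Theorem~\ref{thm:tt1} gives $\mathrm{sign}(\hat{X}_{i,j}) = -\mathrm{sign}((C_H)_{i,j}) = -\mathrm{sign}(C_{i,j})$, the second equality being the soft-thresholding rule~(\ref{thresh2}). Substituting $s_{i,j}=\mathrm{sign}(\hat{X}_{i,j})$ into the stationarity identity yields $W_{i,j} = C_{i,j} - \lambda_{i,j}\,\mathrm{sign}(C_{i,j}) = (C_\lambda)_{i,j} = (C_H)_{i,j}$, matching condition (ii). For condition (iii), the sparsity half of Theorem~\ref{thm:tt1} says $\hat{X}_{i,j}=0$ whenever $(C_H)_{i,j}=0$, and since $(W^{-1})_{i,j} = \hat{X}_{i,j}$, this is exactly what is required. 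Strict concavity of $\log\det$ on $\S_{++}^{n}$ gives uniqueness of the MDMC maximizer whenever it exists, and $W = \hat{X}^{-1}$ is itself a feasible primal point, so $\hat{Z} = \hat{X}^{-1}$.

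The main obstacle will be reconciling the two distinct reasons an off-diagonal entry of $\hat{X}$ can vanish --- either because $(i,j)\notin H$ (a hard equality constraint) or because $(i,j)\in H$ but the weighted lasso drives $\hat{X}_{i,j}$ to zero --- since these cases provide genuinely different KKT information about $W$. Theorem~\ref{thm:tt1} resolves this cleanly: it collapses the RGL sparsity pattern of $\hat{X}$ onto the complement of the support of $C_H$ regardless of which mechanism produces the zero, which is precisely what is needed to align RGL stationarity with the MDMC dual characterization.
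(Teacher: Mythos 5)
Your proposal is correct. It differs from the paper's proof mainly in packaging: the paper uses the sign information from Theorem~\ref{thm:tt1} to replace each $|X_{i,j}|$ in \eqref{eq:rgl} by $-\mathrm{sign}(C_{i,j})X_{i,j}$, thereby rewriting RGL as the smooth equality-constrained problem \eqref{eq:rgl2} with objective $\tr C_{H}X-\log\det X$, and then asserts that \eqref{eq:mdmc2} is its Lagrangian dual, with $\hat{Z}=\hat{X}^{-1}$ following from complementary slackness. You instead stay at the level of first-order conditions: you write the RGL subgradient stationarity for $W=\hat{X}^{-1}$, use the same sign information to resolve the subgradient to $-\mathrm{sign}(C_{i,j})$ and hence identify $W_{i,j}=(C_{\lambda})_{i,j}$ on the support of $C_{H}$, and check the MDMC KKT system directly, concluding by strict concavity of $\log\det$. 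The key step --- invoking Theorem~\ref{thm:tt1} to collapse both the signs and the two distinct sources of zeros (hard constraints for $(i,j)\notin H$ versus lasso shrinkage within $H$) onto the support of $C_{H}$ --- is identical in both arguments. What your route buys is explicitness: the paper's claim that ``taking the dual of \eqref{eq:rgl2} yields \eqref{eq:mdmc2}'' is left to the reader, whereas your KKT verification spells out exactly where feasibility ($W_{i,j}=(C_{H})_{i,j}$ on the support) and stationarity ($\hat{X}_{i,j}=0$ off the support) come from; the paper's route is more compact and makes the primal--dual relationship between \eqref{eq:mdmc1} and \eqref{eq:mdmc2} conceptually transparent.
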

\begin{proof}
Under the conditions of Theorem~\ref{thm:tt1}, the $(i,j)$-th element
of the solution $\hat{X}_{i,j}$ has \emph{opposite signs} to the
corresponding element in $(C_{H})_{i,j}$, and hence also $C_{i,j}$.
Replacing each $|X_{i,j}|$ term in \eqref{eq:rgl} with $\mathrm{sign}(\hat{X}_{i,j})X_{i,j}=-\mathrm{sign}(C_{i,j})X_{i,j}$
yields
\begin{align}
\hat{X}=\underset{X\succ0}{\text{minimize }} & \underbrace{\tr CX-\sum_{i=1}^{n}\sum_{j=i+1}^{n}\mathrm{sign}(C_{i,j})\lambda_{i,j}X_{i,j}}_{\equiv\tr C_{\lambda}X}-\log\det X\label{eq:rgl2}\\
\text{subject to } & X_{i,j}=0\qquad\forall(i,j)\notin H.\nonumber 
\end{align}
The constraint $X\in\S_{H}^{n}$ further makes $\tr C_{\lambda}X=\tr C_{\lambda}P_{H}(X)=\tr P_{H}(C_{\lambda})X\equiv\tr C_{H}X$.
Taking the dual of~\eqref{eq:rgl2} yields~\eqref{eq:mdmc2}; complementary
slackness yields $\hat{Z}=\hat{X}^{-1}$.
\end{proof}
Standard manipulations show that (\ref{eq:mdmc2}) is the Lagrangian
dual of (\ref{eq:mdmc1}), thus explaining the etymology of (\ref{eq:mdmc1})
as MDMC.

\section{\label{sec:Proposed-Algorithm}Proposed Algorithm}

This section describes an efficient algorithm to solve MDMC (\ref{eq:mdmc1})
in which the sparsity pattern $G$ is \emph{nonchordal}. If we assume
that the input matrix $C_{\lambda}$ is sparse, and that sparse Cholesky
factorization is able to solve $C_{\lambda}x=b$ in $O(n)$ time,
then our algorithm is guaranteed to compute an $\epsilon$-accurate
solution in $O(n\log\epsilon^{-1})$ time and $O(n)$ memory.

The algorithm is fundamentally a Newton-CG method, i.e. Newton's method
in which the Newton search directions are computed using conjugate
gradients (CG). It is developed from four key insights:

\textbf{1. Chordal embedding is easy via sparse matrix heuristics.}
State-of-the-art algorithms for (\ref{eq:mdmc1}) begin by computing
a chordal embedding $\tilde{G}$ for $G$. The optimal chordal embedding
with the fewest number of nonzeros $|\tilde{G}|$ is NP-hard to compute,
but a good-enough embedding with $O(n)$ nonzeros is sufficient for
our purposes. Computing a good $\tilde{G}$ with $|\tilde{G}|=O(n)$
is exactly the same problem as finding a sparse Cholesky factorization
$C_{\lambda}=LL^{T}$ with $O(n)$ fill-in. Using heuristics developed
for numerical linear algebra, we are able to find sparse chordal embeddings
for graphs containing millions of edges and hundreds of thousands
of nodes in seconds. 

\textbf{2. Optimize directly on the sparse matrix cone.} Using log-det
barriers for sparse matrix cones~\cite{dahl2008covariance,Andersen2010,andersen2013logarithmic,vandenberghe2015chordal},
we can optimize directly in the space $\S_{\tilde{G}}^{n}$, while
ignoring all matrix elements outside of $\tilde{G}$. If $|\tilde{G}|=O(n)$,
then only $O(n)$ decision variables must be explicitly optimized.
Moreover, each function evaluation, gradient evaluation, and matrix-vector
product with the Hessian can be performed in $O(n)$ time, using the
numerical recipes in~\cite{andersen2013logarithmic}.

\textbf{3. The dual is easier to solve than the primal.} The primal
problem starts with a feasible point $X\in\S_{\tilde{G}}^{n}$ and
seeks to achieve first-order optimality. The dual problem starts with
an infeasible optimal point $X\notin\S_{\tilde{G}}^{n}$ satisfying
first-order optimality, and seeks to make it feasible. Feasibility
is easier to achieve than optimality, so the dual problem is easier
to solve than the primal.

\textbf{4. Conjugate gradients (CG) converges in $O(1)$ iterations.}
Under the same conditions that allow Theorem~\ref{thm:tt1} to work,
our main result (Theorem~\ref{thm:cond_bound}) bounds the condition
number of the Newton subproblem to be $O(1)$, independent of the
problem dimension $n$ and the current accuracy $\epsilon$. It is
therefore cheaper to solve this subproblem using CG \emph{to machine
precision} $\delta_{\mach}$ in $O(n\log\delta_{\mach}^{-1})$ time
than it is to solve for it directly in $O(nm^{2}+m^{3})$ time using
Cholesky factorization~\cite{dahl2008covariance,Andersen2010,andersen2013logarithmic}.
Moreover, CG is an optimal Krylov subspace method, and as such, it
is often able to exploit clustering in the eigenvalues to converge
superlinearly. Finally, computing the Newton direction to high accuracy
further allows the outer Newton method to also converge quadratically.

The remainder of this section describes each consideration in further
detail. We state the algorithm explicitly in Section~\ref{subsec:The-full-algorithm}.

\subsection{Efficient chordal embedding}

Following~\cite{dahl2008covariance}, we begin by reformulating (\ref{eq:mdmc1})
into a sparse chordal matrix program 
\begin{align}
\hat{X}=\text{ minimize } & \tr CX-\log\det X\label{eq:prim1}\\
\text{subject to } & X_{i,j}=0\quad\forall(i,j)\in\tilde{G}\backslash G.\nonumber \\
 & X\in\S_{\tilde{G}}^{n}.\nonumber 
\end{align}
in which $\tilde{G}$ is a \emph{chordal embedding} for $G$: a sparsity
pattern $\tilde{G}\supset G$ whose graph contains no induced cycles
greater than three. This can be implemented using standard algorithms
for large-and-sparse linear equations, due to the following result. 
\begin{prop}
\label{prop:embedding}Let $C\in\S_{G}^{n}$ be a positive definite
matrix with sparsity pattern $G$. Compute its unique lower-triangular
Cholesky factor $L$ satisfying $C=LL^{T}$. Ignoring perfect numerical
cancellation, the sparsity pattern of $L+L^{T}$ is a chordal embedding
$\tilde{G}\supset G$. 
\end{prop}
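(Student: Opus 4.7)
The plan is to verify two separate claims: first that $G \subseteq \tilde{G}$, and second that $\tilde{G}$ is chordal.

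For containment, I would argue directly from $C = LL^{T}$. The in-place Cholesky algorithm processes columns left-to-right and only adds to lower-triangular entries; absent exact cancellation, any nonzero $C_{i,j}$ with $i\ge j$ forces $L_{i,j}\ne 0$. Hence the nonzero pattern of the lower triangle of $L$ contains that of $C$, and symmetrization gives $\tilde{G}\supset G$.

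For chordality, the key tool is the classical path characterization of Cholesky fill-in (due to Rose; see also George--Liu): for $i>j$, one has $L_{i,j}\ne 0$ if and only if there is a path $i = v_{0}, v_{1}, \ldots, v_{p} = j$ in $G$ whose intermediate vertices $v_{1},\ldots,v_{p-1}$ all carry indices strictly less than $j$. I would cite this as a known result rather than reprove it, since the proof requires a careful induction on the Cholesky update. Given this characterization, the chordality argument is short and by contradiction. Suppose $\tilde{G}$ contains an induced cycle of length at least four, let $v$ be the vertex of the cycle with the smallest index, and let $u,w$ be its two neighbors in the cycle, with (without loss of generality) $u>w>v$. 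Applying the characterization to the edges $(u,v)$ and $(w,v)$ in $\tilde{G}$ yields a $u$-to-$v$ path and a $w$-to-$v$ path in $G$ whose intermediate vertices all have index less than $v$. Concatenating them gives a $u$-to-$w$ walk in $G$ all of whose intermediate vertices have index at most $v < w$; reducing this walk to a simple path cannot introduce larger indices, so the characterization yields $L_{u,w}\ne 0$. Thus $(u,w)\in\tilde{G}$ is a chord of the cycle, contradicting that the cycle is induced.

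The main obstacle is reliance on the fill-path characterization, which I would treat as a cited black box from the sparse linear algebra literature. A secondary subtlety is the proviso ``ignoring perfect numerical cancellation'': I would make explicit that the proposition concerns the \emph{symbolic} Cholesky pattern, which is determined by $G$ alone and satisfies both the containment and the fill-path characterization used above. Everything else in the argument is combinatorial and essentially one line per claim.
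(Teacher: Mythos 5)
Your proposal is correct. Note that the paper does not actually supply a proof of this proposition: it simply cites Rose (1970) and Vandenberghe--Andersen for the classical result that the filled graph of a sparse Cholesky factorization is a chordal supergraph of the original pattern. What you have written is essentially a reconstruction of that classical argument, and it is sound: the containment $G\subseteq\tilde{G}$ follows from the no-cancellation convention (or, even more simply, from the fill-path characterization itself, since an edge of $G$ is a fill path with no intermediate vertices), and your chordality argument is the standard one --- take an induced cycle of length at least four in $\tilde{G}$, let $v$ be its lowest-indexed vertex with cycle-neighbors $u>w>v$, concatenate the two fill paths witnessing $(u,v)$ and $(w,v)$ to obtain a $u$--$w$ walk whose intermediate vertices all have index at most $v<\min(u,w)$, and conclude that $(u,w)$ is a chord. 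One small bookkeeping point: the fill-path characterization you invoke as a black box is itself the substance of the cited literature (it is usually attributed to Rose, Tarjan and Lueker), so your proof and the paper's citation ultimately rest on the same foundation; you have simply made explicit the combinatorial step from that lemma to chordality, which the paper leaves implicit. Your remark that the statement should be read symbolically (the pattern of $L$ determined by $G$ alone, ignoring exact cancellation) is exactly the right way to interpret the proposition's caveat.
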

\begin{proof}
The original proof is due to~\cite{rose1970triangulated}; see also~\cite{vandenberghe2015chordal}.
\end{proof}
Note that $\tilde{G}$ can be determined directly from $G$ using
a \emph{symbolic} Cholesky algorithm, which simulates the steps of
Gaussian elimination using Boolean logic. Moreover, we can substantially
reduce the number of edges added to $G$ by reordering the columns
and rows of $C$ using a \emph{fill-reducing ordering}. 
\begin{cor}
Let $\Pi$ be a permutation matrix. For the same $C\in\S_{G}^{n}$
in Proposition~\ref{prop:embedding}, compute the unique Cholesky
factor satisfying $\Pi C\Pi^{T}=LL^{T}$. Ignoring perfect numerical
cancellation, the sparsity pattern of $\Pi(L+L^{T})\Pi^{T}$ is a
chordal embedding $\tilde{G}\supset G$. 
\end{cor}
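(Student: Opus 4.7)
The plan is to reduce this Corollary to Proposition~\ref{prop:embedding} by treating the symmetric permutation $\Pi C\Pi^{T}$ as the primary object, applying the proposition to it, and then transporting the conclusion back through $\Pi$. The key observations I will need are, first, that a symmetric permutation of a matrix is exactly a relabeling of vertices on the associated graph, and second, that chordality is a graph-isomorphism invariant, since it is defined purely in terms of induced cycles in the underlying graph.

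Concretely, I would first set $C'=\Pi C\Pi^{T}$ and $G'=\{(\pi(i),\pi(j)):(i,j)\in G\}$, where $\pi$ is the permutation of $\{1,\ldots,n\}$ associated with $\Pi$. A short bookkeeping step shows that $C'\in\S_{G'}^{n}$ and that $C'$ remains symmetric positive definite, since $\Pi$ is orthogonal. Applying Proposition~\ref{prop:embedding} to $C'$ then yields that, modulo perfect numerical cancellation, the sparsity pattern $\tilde{G}'$ of $L+L^{T}$ is a chordal embedding of $G'$, i.e.\ $\tilde{G}'\supset G'$ and $\tilde{G}'$ is chordal.

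Next I would transport this back through $\Pi^{T}$. Conjugating $L+L^{T}$ by $\Pi^{T}$ on the left and $\Pi$ on the right gives the matrix $\Pi(L+L^{T})\Pi^{T}$ whose sparsity pattern is the image of $\tilde{G}'$ under $\pi^{-1}$, namely $\tilde{G}=\{(\pi^{-1}(i),\pi^{-1}(j)):(i,j)\in\tilde{G}'\}$. Since $\tilde{G}'\supset G'$, applying $\pi^{-1}$ entrywise gives $\tilde{G}\supset G$. It remains to argue chordality: the graph on $n$ vertices defined by $\tilde{G}$ is isomorphic to the graph defined by $\tilde{G}'$ via the bijection $\pi$, and since chordality is a property of the graph up to isomorphism, $\tilde{G}$ is chordal as well.

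The only real subtlety is the bookkeeping identifying symmetric permutation of a matrix with vertex relabeling of the associated sparsity graph, and the observation that the notion of ``chordal embedding'' used in Proposition~\ref{prop:embedding} is invariant under such relabeling. Once these are made explicit, the Corollary follows immediately by applying Proposition~\ref{prop:embedding} to $\Pi C\Pi^{T}$ and relabeling back. No new analytic content beyond the original proposition is required, which is why I expect the proof to be essentially a one-line reduction in the paper's appendix.
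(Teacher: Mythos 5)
Your reduction is correct and is exactly the argument the paper intends: the corollary is stated without proof as an immediate consequence of Proposition~\ref{prop:embedding} applied to $\Pi C\Pi^{T}$, using the fact that symmetric permutation is a vertex relabeling and chordality is isomorphism-invariant. The only nit is the direction of the conjugation when transporting back (whether one writes $\Pi(L+L^{T})\Pi^{T}$ or $\Pi^{T}(L+L^{T})\Pi$ depends on the convention for $\Pi$), which is purely notational.
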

The problem of finding the best choice of $\Pi$ is known as the \emph{fill-minimizing}
problem, and is NP-complete~\cite{yannakakis1981computing}. However,
good orderings are easily found using heuristics developed for numerical
linear algebra, like minimum degree ordering~\cite{george1989evolution}
and nested dissection~\cite{gilbert1988some,agrawal1993cutting}.
In fact, \cite{gilbert1988some}~proved that nested dissection is
$O(\log(n))$ suboptimal for bounded-degree graphs, and notes that
``we do not know a class of graphs for which {[}nested dissection
is suboptimal{]} by more than a constant factor.\textquotedblright{}

If $G$ admits sparse chordal embeddings, then a good-enough $|\tilde{G}|=O(n)$
will usually be found using minimum degree or nested dissection. In
MATLAB, the minimum degree ordering and symbolic factorization steps
can be performed in two lines of code; see the snippet in Figure~\ref{fig:MATLAB-code-snippet}.
\begin{figure}
\begin{lstlisting}
p = amd(C); % fill-reducing ordering
[~,~,~,~,R] = symbfact(C(p,p)); % chordal embedding by elimination
Gt = R+R'; Gt(p,p) = Gt; % recover embedded pattern
m = nnz(R)-nnz(tril(C)); % count the number of added eges
\end{lstlisting}

\caption{\label{fig:MATLAB-code-snippet}MATLAB code for chordal embedding
via its internal approximate minimum degree ordering. Given a sparse
matrix (\texttt{C}), compute a chordal embedding (\texttt{Gt}) and
the number of added edges (m).}
\end{figure}

\subsection{Logarithmic barriers for sparse matrix cones}

Define the cone of \emph{sparse positive semidefinite matrices} $\K$,
and the cone of \emph{sparse matrices with positive semidefinite completions}
$\K_{*}$, as the following 
\[
\K=\S_{+}^{n}\cap\S_{\tilde{G}}^{n},\qquad\K_{*}=\{S\bullet X\ge0:S\in\S_{\tilde{G}}\}=P_{\tilde{G}}(\S_{+}^{n}).
\]
Then (\ref{eq:prim1}) can be posed as the primal-dual pair: 
\begin{align}
\arg\min_{X\in\K} & \{C\bullet X+f(X):A^{T}(X)=0\},\label{eq:prim2}\\
\arg\max_{S\in\K_{*},y\in\mathbb{R}^{m}} & \{-f_{*}(S):S=C-A(y)\},\label{eq:dual2}
\end{align}
with in which $f$ and $f_{*}$ are the ``log-det'' barrier functions
on $\K$ and $\K_{*}$ as introduced by~\cite{dahl2008covariance,Andersen2010,andersen2013logarithmic}
\[
f(X)=-\log\det X,\qquad f_{*}(S)=-\min_{X\in\K}\{S\bullet X-\log\det X\}.
\]
The linear map $A:\R^{m}\to\S_{\tilde{G}\backslash G}^{n}$ converts
a list of $m$ variables into the corresponding matrix in $\tilde{G}\backslash G$.
The gradients of $f$ are simply the projections of their usual values
onto $\S_{\tilde{G}}^{n}$, as in 
\[
\nabla f(X)=-P_{\tilde{G}}(X^{-1}),\qquad\nabla^{2}f(X)[Y]=P_{\tilde{G}}(X^{-1}YX^{-1}).
\]
Given any $S\in\K_{*}$ let $X\in\K$ be the unique matrix satisfying
$P_{\tilde{G}}(X^{-1})=S$. Then we have 
\[
f_{*}(S)=n+\log\det X,\qquad\nabla f_{*}(S)=-X,\qquad\nabla^{2}f_{*}(S)[Y]=\nabla^{2}f(X)^{-1}[Y].
\]
Assuming that $\tilde{G}$ is \emph{sparse }and \emph{chordal}, all
six operations can be efficiently evaluated in $O(n)$ time and $O(n)$
memory, using the numerical recipes described in~\cite{andersen2013logarithmic}.

\subsection{Solving the dual problem}

Our algorithm actually solves the dual problem (\ref{eq:dual2}),
which can be rewritten as an unconstrained optimization problem 
\begin{equation}
\hat{y}\equiv\arg\min_{y\in\R^{m}}g(y)\equiv f_{*}(C_{\lambda}-A(y)).\label{eq:orig_prob}
\end{equation}
After the solution $\hat{y}$ is found, we can recover the optimal
estimator for the primal problem via $\hat{X}=-\nabla f_{*}(C_{\lambda}-A(y))$.
The dual problem (\ref{eq:dual2}) is easier to solve than the primal
(\ref{eq:prim2}) because the origin $y=0$ often lies very close
to the solution $\hat{y}$. To see this, note that $y=0$ produces
a candidate estimator $\tilde{X}=-\nabla f_{*}(C_{\lambda})$ that
solves the \emph{chordal} matrix completion problem
\[
\tilde{X}=\arg\min\{\tr C_{\lambda}X-\log\det X:X\in\S_{\tilde{G}}^{n}\},
\]
which is a relaxation of the nonchordal problem posed over $\S_{G}^{n}$.
As observed by several previous authors~\cite{dahl2008covariance},
this relaxation is a high quality guess, and $\tilde{X}$ is often
``almost feasible'' for the original nonchordal problem posed over
$\S_{G}^{n}$, as in $\tilde{X}\approx P_{G}(\tilde{X})$. Some simple
algebra shows that the gradient $\nabla g$ evaluated at the origin
has Euclidean norm $\|\nabla g(0)\|=\|\tilde{X}-P_{G}(\tilde{X})\|_{F}$,
so if $\tilde{X}\approx P_{G}(\tilde{X})$ holds true, then the origin
$y=0$ is close to optimal. Starting from this point, we can expect
Newton's method to rapidly converge at a quadratic rate.

\subsection{\label{sub:Proof-of-Linear}CG converges in $O(1)$ iterations}

The most computationally expensive part of Newton's method is the
solution of the Newton direction $\Delta y$ via the $m\times m$
system of equations 
\begin{equation}
\nabla^{2}g(y)\Delta y=-\nabla g(y).\label{eq:newton_sys}
\end{equation}
The Hessian matrix $\nabla^{2}g(y)$ is fully dense, but matrix-vector
products are linear $O(n)$ time using the algorithms in Section~\ref{subsec:Logarithmic-barriers-for}.
This insight motivates solving (\ref{eq:newton_sys}) using an iterative
Krylov subspace method like conjugate gradients (CG), which is a \emph{matrix-free}
method that requires a single matrix-vector product with $\nabla^{2}g(y)$
at each iteration~\cite{barrett1994templates}. Starting from the
origin $p=0$, the method converges to an $\epsilon$-accurate search
direction $p$ satisfying 
\[
(p-\Delta y)^{T}\nabla^{2}g(y)(p-\Delta y)\le\epsilon|\Delta y^{T}\nabla g(y)|
\]
in at most 
\begin{equation}
\left\lceil \sqrt{\kappa_{g}}\log(2/\epsilon)\right\rceil \text{ CG iterations,}\label{eq:cg_iter_bnd}
\end{equation}
where $\kappa_{g}=\|\nabla^{2}g(y)\|\|\nabla^{2}g(y)^{-1}\|$ is the
condition number of the Hessian matrix~\cite{greenbaum1997iterative,saad2003iterative}.
In many important convex optimization problems, the condition number
$\kappa_{g}$ grows like $O(1/\epsilon)$ or $O(1/\epsilon^{2})$
as the outer Newton iterates approach an $\epsilon$-neighborhood
of the true solution. As a consequence, Newton-CG methods typically
require $O(1/\sqrt{\epsilon})$ or $O(1/\epsilon)$ CG iterations.

It is therefore surprising that we are able to bound $\kappa_{g}$
globally for the MDMC problem. Below, we state our main result, which
says that the condition number $\kappa_{g}$ depends polynomially
on the problem data and the quality of the initial point, but is \emph{independent
of the problem dimension} $n$\emph{ and the accuracy of the current
iterate $\epsilon$. }

\begin{restatable}{thm}{condbound}\label{thm:cond_bound}At any $y$
satisfying $g(y)\le g(y_{0})$ and $\nabla g(y)^{T}(y-y_{0})\le\phi_{\max}$,
the condition number $\kappa_{g}$ of the Hessian matrix $\nabla^{2}g(y)$
is bound 
\begin{align}
\kappa_{g} & \le4\left(1+\frac{\phi_{\max}^{2}\lambda_{\max}(X_{0})}{\lambda_{\min}(\hat{X})}\right)^{2}\label{eq:thm_bnd1}
\end{align}
where:
\begin{itemize}
\item $\phi_{\max}=g(y_{0})-g(\hat{y})$ is the suboptimality of the initial
point,
\item $\A=[\vector A_{1},\ldots,\vector A_{m}]$ is the vectorized version
of the problem data,
\item $X_{0}=-\nabla f_{*}(S_{0})$ and $S_{0}=C-A(y_{0})$ are the initial
primal-dual pair,
\item $\hat{X}=-\nabla f_{*}(\hat{S})$ and $\hat{S}=C-A(\hat{y})$ are
the solution primal-dual pair.
\end{itemize}
\end{restatable}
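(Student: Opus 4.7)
The plan is to reduce the condition number $\kappa_g$ of the dual Hessian to the spectral condition number of the primal completion $X = -\nabla f_*(S)$ viewed as a positive definite matrix, and then to use the two hypotheses to control the extremal eigenvalues of $X$ in terms of the initial and optimal points.

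\textbf{Reduction to $\kappa(X)^2$.} First I would establish $\kappa_g \le (\lambda_{\max}(X)/\lambda_{\min}(X))^2$ up to an absolute constant. Writing $\nabla^2 g(y) = \A^T H \A$ where $H$ is the matrix of $\nabla^2 f_*(S)$ on $\S_{\tilde{G}}^n$, the basis $\{A_i\}$ of $\S_{\tilde{G}\setminus G}^n$ can be chosen as (scaled) elementary symmetric matrices, making $\A^T\A$ a fixed multiple of the identity that contributes only a constant factor. The identity $Y\bullet \nabla^2 f(X)[Y] = \|X^{-1/2}YX^{-1/2}\|_F^2$ for $Y \in \S_{\tilde{G}}^n$, combined with the duality $\nabla^2 f_*(S) = \nabla^2 f(X)^{-1}$, yields the operator sandwich $\lambda_{\min}(X)^2 I \preceq \nabla^2 f_*(S) \preceq \lambda_{\max}(X)^2 I$ on $\S_{\tilde{G}}^n$, which proves the reduction.

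\textbf{Trace identities from the hypotheses.} The next step is to translate the two hypotheses into trace-type constraints on $X$. The key algebraic identity is
\[
\nabla g(y)^T(y - y_0) \;=\; X\bullet(S_0 - S) \;=\; \tr(XX_0^{-1}) - n,
\]
which holds because $X \in \S_{\tilde{G}}^n$ together with $S = P_{\tilde{G}}(X^{-1})$ and $S_0 = P_{\tilde{G}}(X_0^{-1})$ gives $X\bullet S = \tr(XX^{-1}) = n$ and $X\bullet S_0 = \tr(XX_0^{-1})$. Hypothesis (2) then reads $\tr(XX_0^{-1}) \le n + \phi_{\max}$, which yields the upper bound $\lambda_{\max}(X) \le (n+\phi_{\max})\lambda_{\max}(X_0)$. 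For the reciprocal direction, I would exploit the optimality condition $\nabla g(\hat y) = A^T(\hat X) = 0$, which forces $\hat X \in \S_G^n$ and hence makes it orthogonal to $S - \hat S \in \S_{\tilde{G}\setminus G}^n$; this produces the companion identity $\tr(\hat X X^{-1}) = \hat X\bullet S = n$, valid at any feasible $y$. Combined with Hypothesis (1) recast as the Bregman bound $D_{f_*}(S \Vert \hat S) = g(y) - g(\hat y) \le \phi_{\max}$, this should give a lower bound of the form $\lambda_{\min}(X) \ge \lambda_{\min}(\hat X)/F(\phi_{\max})$ for an explicit function $F$.

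\textbf{Main obstacle.} The trace inequality above gives $\lambda_{\max}(X) \lesssim (n + \phi_{\max})\lambda_{\max}(X_0)$ with a dimension-dependent factor $n$, whereas the theorem's bound is dimension-free with quadratic dependence $\phi_{\max}^2$. Closing this gap is the technical crux of the proof. I would attempt to do so by isolating a single dominant eigenvalue from the log-det Bregman inequality $\sum_i (t_i - 1 - \log t_i) \le \phi_{\max}$ (with $t_i$ the eigenvalues of $XX_0^{-1}$), and then re-combining it with the linear constraint $\tr(XX_0^{-1}) - n \le \phi_{\max}$ via a convexity argument; the dual-side bound on $X^{-1}$ relative to $\hat X$ should follow by symmetry using the companion identity. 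The quadratic $\phi_{\max}^2$ in the final bound should then emerge naturally as the product of one $\phi_{\max}$ factor from each side, with the overall factor $4$ absorbing the constants from $\A^T\A$ and the eigenvalue sandwich.
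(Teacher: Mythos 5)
Your overall route coincides with the paper's: reduce $\kappa_g$ to $(\lambda_{\max}(X)/\lambda_{\min}(X))^2$ via $\nabla^{2}g(y)=\A^{T}\nabla^{2}f(X)^{-1}\A$ with $\A^{T}\A=I$, and then control the eigenvalues of $X$ through exactly the two identities you write down, $X\bullet P_{\tilde{G}}(X_{0}^{-1})=n+\nabla g(y)^{T}(y-y_{0})$ and $\hat{X}\bullet P_{\tilde{G}}(X^{-1})=n$ (the latter from $A^{T}(\hat{X})=0$). The paper packages these, together with the descent hypothesis $\log\det\hat{X}\le\log\det X\le\log\det X_{0}$, as the two bounds $\phi(X_{0}^{-1}X)\le2\phi_{\max}$ and $\phi(X^{-1}\hat{X})\le\phi_{\max}$, where $\phi(M)=\tr M-\log\det M-n$ is precisely your log-det Bregman divergence. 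Up to that point you have all the ingredients.

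The gap is that the step you yourself flag as the technical crux is asserted rather than proved, and the paper's resolution is different from the recombination you sketch. The two missing ideas are: (i) the elementary lemma $\phi(M)\ge(\sqrt{\lambda_{1}(M)}-\sqrt{\lambda_{n}(M)})^{2}$, obtained by discarding every nonnegative summand $t_{i}-1-\log t_{i}$ except the two extremal ones and then absorbing the remaining logarithm via $\sqrt{t_{1}t_{n}}-\log\sqrt{t_{1}t_{n}}-1\ge0$. This truncation is what removes the dimension dependence; your intermediate bound $\lambda_{\max}(X)\le(n+\phi_{\max})\lambda_{\max}(X_{0})$ carries the factor $n$ precisely because it retains all $n$ eigenvalues of the trace, and no ``convexity argument'' recombining the linear constraint with the full Bregman sum is needed or used. (ii) The two resulting inequalities, $(\sqrt{\lambda_{1}}-\sqrt{\lambda_{n}})^{2}\le2\phi_{\max}\lambda_{\max}(X_{0})$ and $(\sqrt{1/\lambda_{n}}-\sqrt{1/\lambda_{1}})^{2}\le\phi_{\max}/\lambda_{\min}(\hat{X})$ (with $\lambda_{1},\lambda_{n}$ the extremal eigenvalues of $X$), are \emph{multiplied}, so that the unknown eigenvalues survive only through $(\sqrt{\lambda_{1}/\lambda_{n}}-\sqrt{\lambda_{n}/\lambda_{1}})^{2}=\lambda_{1}/\lambda_{n}+\lambda_{n}/\lambda_{1}-2$; the individual quantities $\lambda_{\max}(X)$ and $\lambda_{\min}(X)$ are never bounded separately. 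This multiplication is exactly where the theorem's $\phi_{\max}^{2}$ and the ratio $\lambda_{\max}(X_{0})/\lambda_{\min}(\hat{X})$ come from. Your instinct to route the lower bound through $\hat{X}$ rather than through $X_{0}$ is correct (a lower bound on $\lambda_{\min}(X)$ extracted from $\phi(X_{0}^{-1}X)$ alone would degrade exponentially in $\phi_{\max}$ through the $-\log t$ term), but as written the decisive eigenvalue extraction and the pairing of the two bounds are left as expectations (``should follow by symmetry,'' ``should emerge naturally'') rather than carried out, so the proof is incomplete at its central step.
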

\begin{proof}
See Appendix~\ref{sec:appenCond}.
\end{proof}
\begin{rem}
Newton's method is a descent method, so its $k$-th iterate $y_{k}$
trivially satisfies $g(y_{k})\le g(y_{0})$. Technically, the condition
$\nabla g(y_{k})^{T}(y_{k}-y_{0})\le\phi_{\max}$ can be guaranteed
by enclosing Newton's method within an outer auxillary path-following
loop; see Section 4.3.5 of~\cite{nesterov2013introductory}. In practice,
naive Newton's method will usually satisfy the condition on its own;
see our numerical experiments in Section~\ref{sec:Numerical-Results}.
\end{rem}
Applying Theorem~\ref{thm:cond_bound} to (\ref{eq:cg_iter_bnd})
shows that CG solves each Newton subproblem to $\epsilon$-accuracy
in $O(\log\epsilon^{-1})$ iterations. Multiplying this figure by
the $O(\log\log\epsilon^{-1})$ Newton steps to converge yields a
\emph{global} iteration bound of 
\[
O(\log\epsilon^{-1}\cdot\log\log\epsilon^{-1})\approx O(1)\text{ CG iterations.}
\]
Multiplying this figure by the $O(n)$ cost of each CG iteration proves
the claimed time complexity in (\ref{eq:lintime}). In practice, CG
typically converges much faster than this worst-case bound, due to
its ability to exploit the clustering of eigenvalues in $\nabla^{2}g(y)$;
see~\cite{greenbaum1997iterative,saad2003iterative}. Moreover, accurate
Newton directions are only needed to guarantee quadratic convergence
close to the solution. During the initial Newton steps, we may loosen
the error tolerance for CG for a significant speed-up. Inexact Newton
steps can be used to obtain a speed-up of a factor of 2-3.

\subsection{\label{subsec:The-full-algorithm}The full algorithm}

To summarize, we begin by computing a chordal embedding $\tilde{G}$
for the sparsity pattern $G$ of $C_{\lambda}$, using the code snippet
in Figure~\ref{fig:MATLAB-code-snippet}. We use the embedding to
reformulate (\ref{eq:mdmc1}) as (\ref{eq:prim1}), and solve the
unconstrained problem $\hat{y}=\min_{y}g(y)$ defined in (\ref{eq:orig_prob}),
using Newton's method
\begin{align*}
y_{k+1} & =y_{k}+\alpha_{k}\Delta y_{k}, & \Delta y_{k} & \equiv-\nabla^{2}g(y_{k})^{-1}\nabla g(y_{k})
\end{align*}
starting at the origin $y_{0}=0$. The function value $g(y)$, gradient
$\nabla g(y)$ and Hessian matrix-vector products are all evaluated
using the numerical recipes described by~\cite{andersen2013logarithmic}.

At each $k$-th Newton step, we compute the Newton search direction
$\Delta y_{k}$ using conjugate gradients. A loose tolerance is used
when the Newton decrement $\delta_{k}=|\Delta y_{k}^{T}\nabla g(y_{k})|$
is large, and a tight tolerance is used when the decrement is small,
implying that the iterate is close to the true solution. 

Once a Newton direction $\Delta y_{k}$ is computed with a sufficiently
large Newton decrement $\delta_{k}$, we use a backtracking line search
to determine the step-size $\alpha_{k}$. In other words, we select
the first instance of the sequence $\{1,\rho,\rho^{2},\rho^{3},\dots\}$
that satisfies the Armijo\textendash Goldstein condition 
\[
g(y+\alpha\Delta y)\le g(y)+\gamma\alpha\Delta y^{T}\nabla g(y),
\]
in which $\gamma\in(0,0.5)$ and $\rho\in(0,1)$ are line search parameters.
Our implementation used $\gamma=0.01$ and $\rho=0.5$. We complete
the step and repeat the process, until convergence.

We terminate the outer Newton's method if the Newton decrement $\delta_{k}$
falls below a threshold. This implies either that the solution has
been reached, or that CG is not converging to a good enough $\Delta y_{k}$
to make significant progress. The associated estimator for $\Sigma^{-1}$
is recovered by evaluating $\hat{X}=-\nabla f_{*}(C_{\lambda}-A(\hat{y}))$.

\section{\label{sec:Numerical-Results}Numerical Results}

\begin{figure}[t]
\hfill{}\subfloat[]{\includegraphics[width=0.48\columnwidth]{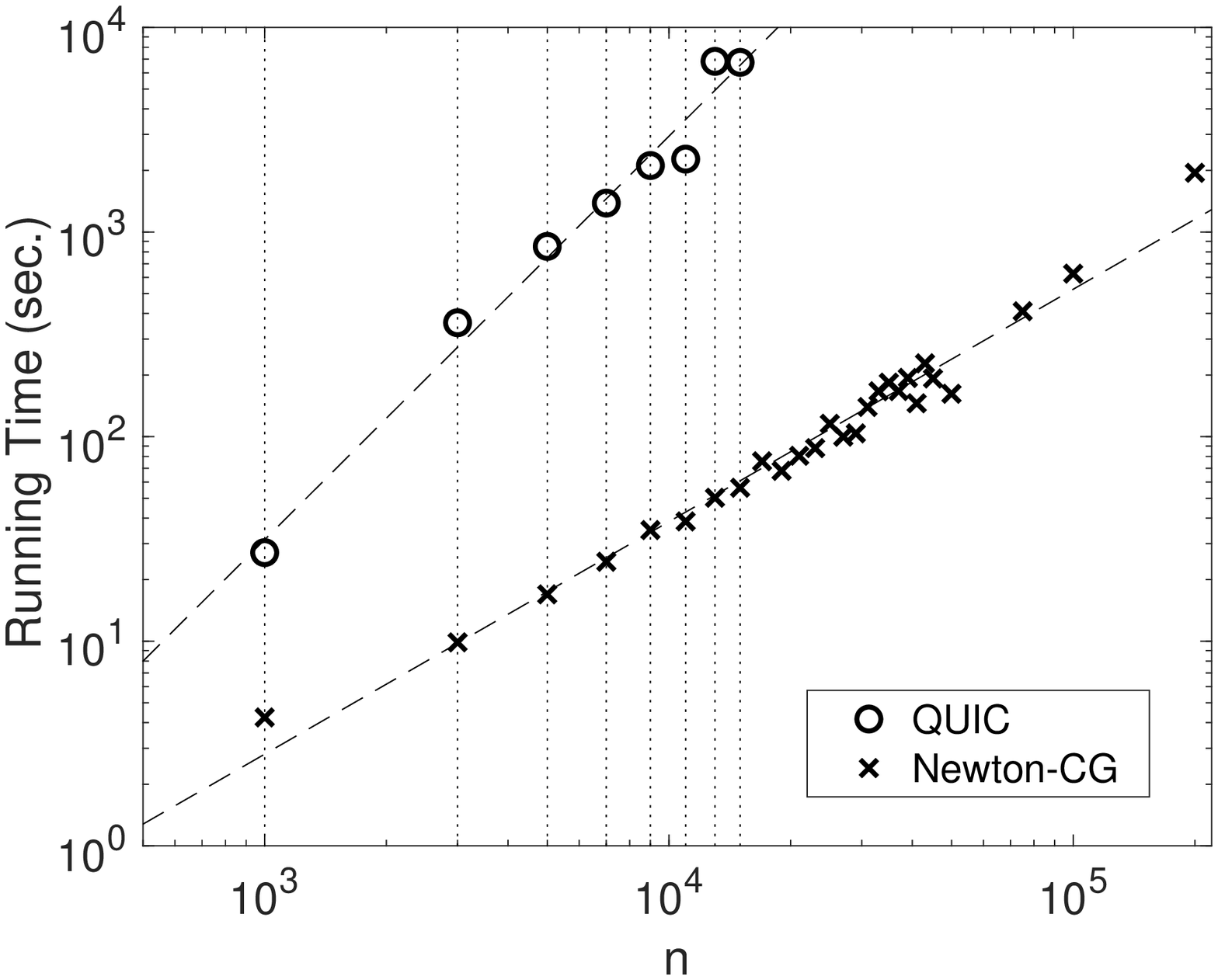}

}\hfill{}\subfloat[]{\includegraphics[width=0.48\columnwidth]{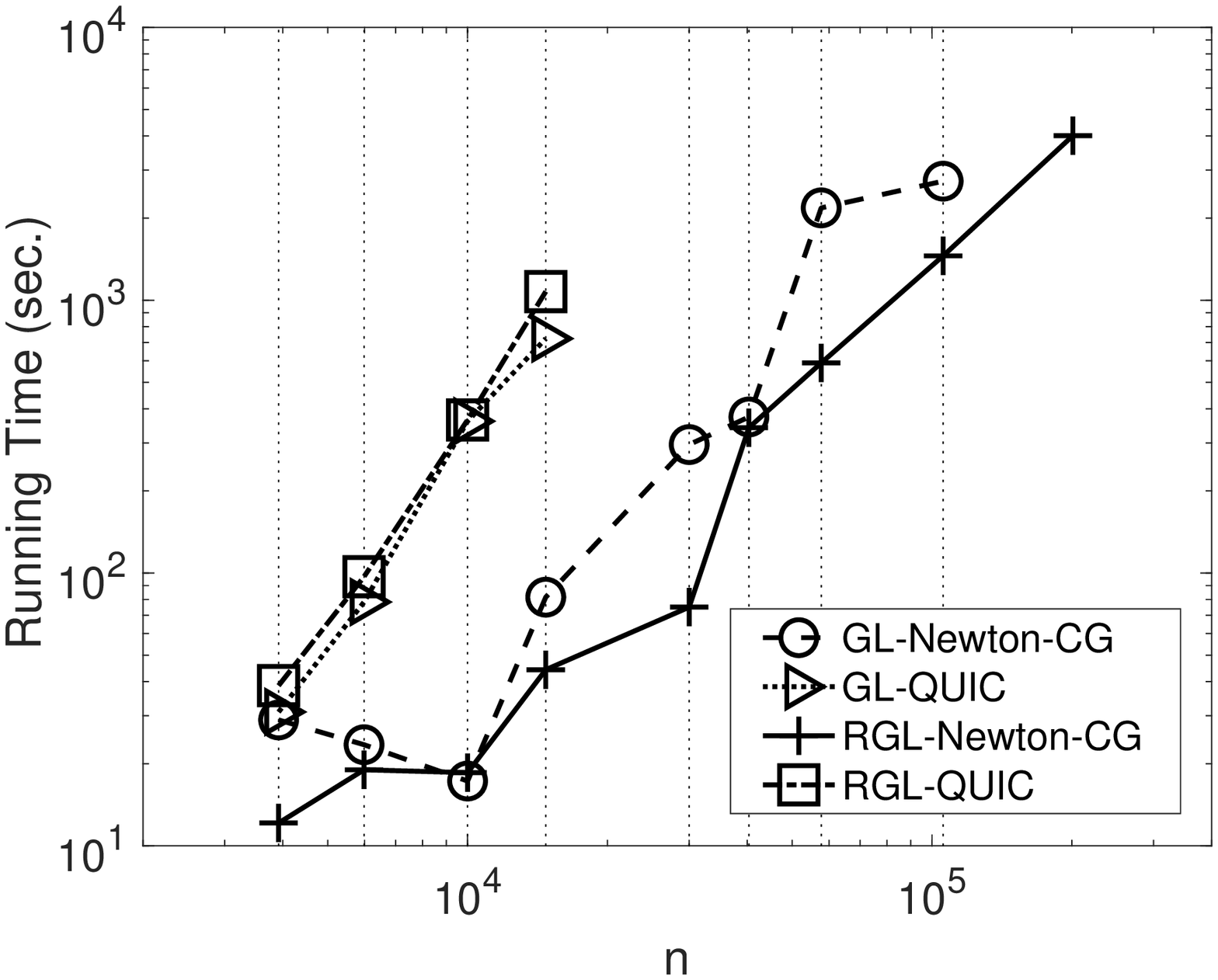}

}\hfill{} \caption{CPU time Newton-CG vs QUIC: (a) case study 1; (b) case study 2.}
\label{fig:results} 
\end{figure}
Finally, we benchmark our algorithm\footnote{The MATLAB source code for our solver can be found at \url{http://alum.mit.edu/www/ryz}}
against \texttt{QUIC}~\cite{Hsieh14}, commonly considered the fastest
solver for graphical lasso or RGL\footnote{\texttt{QUIC} was taken from \url{http://bigdata.ices.utexas.edu/software/1035/}}.
(Another widely-used algorithm is \texttt{GLASSO}~\cite{Friedman08},
but we found it to be significantly slower than \texttt{QUIC}.) We
consider two case studies. The first case study numerically verifies
the claimed $O(n)$ complexity of our MDMC algorithm on problems with
a nearly-banded structure. The second case study performs the full
threshold-MDMC procedure for graphical lasso and RGL, on graphs collected
from real-life applications. 

All experiments are performed on a laptop computer with an Intel Core
i7 quad-core 2.50 GHz CPU and 16GB RAM. The reported results are based
on a serial implementation in MATLAB-R2017b. Both our Newton decrement
threshold and \texttt{QUIC}'s convergence threshold are $10^{-7}$. 

We implemented the soft-thresholding set (\ref{thresh2}) as a serial
routine that uses $O(n)$ memory by taking the $n\times N$ matrix-of-samples
$\mathbf{X}=[\mathbf{x}_{1},\mathbf{x}_{2},\ldots,\mathbf{x}_{N}]$
satisfying $C=\frac{1}{N}\mathbf{X}\mathbf{X}^{T}$ as input. The
routine implicitly partitions $C$ into submatrices of size $4000\times4000$,
and iterates over the submatrices one at a time. For each submatrix,
it explicitly forms the submatrix, thresholds it using dense linear
algebra, and then stores the result as a sparse matrix. 

\subsection{Case Study 1: Banded Patterns}

The first case study aims to verify the claimed $O(n)$ complexity
of our algorithm for MDMC. Here, we avoid the proposed thresholding
step, and focus solely on the MDMC (\ref{eq:mdmc1}) problem. Each
sparsity pattern $G$ is a corrupted banded matrices with bandwidth
$101$. The off-diagonal nonzero elements of $C$ are selected from
the uniform distribution in $[-2,0)$ and then corrupted to zero with
probability 0.3. The diagonal elements are fixed to $5$. Our numerical
experiments fix the bandwidth and vary the number of variables $n$
from 1,000 to 200,000. A time limit of 2 hours is set for both algorithms.

Figure~\ref{fig:results}a compares the running time of both algorithms.
A log-log regression results in an empirical time complexity of $O(n^{1.1})$
for our algorithm, and $O(n^{2})$ for \texttt{QUIC}. The extra 0.1
in the exponent is most likely an artifact our MATLAB implementation.
In either case, \texttt{QUIC}'s quadratic complexity limits it to
$n=1.5\times10^{4}$. By contrast, our algorithm solves an instance
with $n=2\times10^{5}$ in less than 33 minutes. The resulting solutions
are extremely accurate, with optimality and feasibility gaps of less
than $10^{-16}$ and $10^{-7}$, respectively.

\subsection{Case Study 2: Real-Life Graphs}

\begin{table*}
\scriptsize

\hfill{}%
\begin{tabular}{|c|c|c|c|c|c||c|c|c||c||c|c|}
\hline 
\multicolumn{6}{|c|||}{} & \multicolumn{3}{c||}{Newton-CG} & \multicolumn{1}{c|||}{QUIC} & \multicolumn{2}{c|}{}\tabularnewline
\hline 
\#  & file name  & type  & $n$  & $m$ & $m/n$ & sec  & gap  & feas  & sec  & diff. gap  & speed-up\tabularnewline
\hline 
\hline 
1  & freeFlyingRobot-7  & GL  & 3918  & 20196 & 5.15 & 28.9  & 5.7e-17  & 2.3e-7  & 31.0  & 3.9e-4  & 1.07\tabularnewline
\hline 
1  & freeFlyingRobot-7  & RGL  & 3918  & 20196 & 5.15 & 12.1  & 6.5e-17  & 2.9e-8  & 38.7  & 3.8e-5  & 3.20 \tabularnewline
\hline 
2  & freeFlyingRobot-14  & GL  & 5985  & 27185 & 4.56 & 23.5  & 5.4e-17  & 1.1e-7  & 78.3  & 3.8e-4  & 3.33 \tabularnewline
\hline 
2  & freeFlyingRobot-14  & RGL  & 5985  & 27185 & 4.56 & 19.0  & 6.0e-17  & 1.7e-8  & 97.0  & 3.8e-5  & 5.11 \tabularnewline
\hline 
3  & cryg10000  & GL  & 10000  & 170113 & 17.0 & 17.3  & 5.9e-17  & 5.2e-9  & 360.3  & 1.5e-3  & 20.83 \tabularnewline
\hline 
3  & cryg10000  & RGL  & 10000  & 170113 & 17.0 & 18.5  & 6.3e-17  & 1.0e-7  & 364.1  & 1.9e-5  & 19.68 \tabularnewline
\hline 
4  & epb1  & GL  & 14734  & 264832 & 18.0 & 81.6  & 5.6e-17  & 4.3e-8  & 723.5  & 5.1e-4  & 8.86 \tabularnewline
\hline 
4  & epb1  & RGL  & 14734  & 264832 & 18.0 & 44.2  & 6.2e-17  & 3.3e-8  & 1076.4  & 4.2e-4  & 24.35 \tabularnewline
\hline 
5  & bloweya  & GL  & 30004  & 10001 & 0.33 & 295.8  & 5.6e-17  & 9.4e-9  & $*$  & $*$  & $*$ \tabularnewline
\hline 
5  & bloweya  & RGL  & 30004  & 10001 & 0.33 & 75.0  & 5.5e-17  & 3.6e-9  & $*$  & $*$  & $*$ \tabularnewline
\hline 
6  & juba40k  & GL  & 40337  & 18123 & 0.44 & 373.3  & 5.6e-17  & 2.6e-9  & $*$  & $*$  & $*$ \tabularnewline
\hline 
6  & juba40k  & RGL  & 40337  & 18123 & 0.44 & 341.1  & 5.9e-17  & 2.7e-7  & $*$  & $*$  & $*$ \tabularnewline
\hline 
7  & bayer01  & GL  & 57735  & 671293 & 11.6 & 2181.3  & 5.7e-17  & 5.2e-9  & $*$  & $*$  & $*$ \tabularnewline
\hline 
7  & bayer01  & RGL  & 57735  & 671293 & 11.6 & 589.1  & 6.4e-17  & 1.0e-7  & $*$  & $*$  & $*$ \tabularnewline
\hline 
8  & hcircuit  & GL  & 105676  & 58906 & 0.55 & 2732.6  & 5.8e-17  & 9.0e-9  & $*$  & $*$  & $*$ \tabularnewline
\hline 
8  & hcircuit  & RGL  & 105676  & 58906 & 0.55 & 1454.9  & 6.3e-17  & 7.3e-8  & $*$  & $*$  & $*$ \tabularnewline
\hline 
9  & co2010  & RGL  & 201062  & 1022633 & 5.08 & 4012.5  & 6.3e-17  & 4.6e-8  & $*$  & $*$  & $*$ \tabularnewline
\hline 
\end{tabular}\hfill{}

\caption{Details of case study 2. Here, ``$n$'' is the size of the covariance
matrix, ``$m$'' is the number of edges added to make its sparsity
graph chordal, ``sec'' is the running time in seconds, ``gap''
is the optimality gap, ``feas'' is the feasibility the solution,
``diff. gap'' is the difference in duality gaps for the two different
methods, and ``speed-up'' is the fact speed-up over QUIC achieved
by our algorithm.}
\label{Table}
\end{table*}
The second case study aims to benchmark the full thresholding-MDMC
procedure for sparse inverse covariance estimation on real-life graphs.
The actual graphs (i.e. the sparsity patterns) for $\Sigma^{-1}$
are chosen from \textit{SuiteSparse Matrix Collection}~\cite{Sparse11}\textemdash a
publicly available dataset for large-and-sparse matrices collected
from real-world applications. Our chosen graphs vary in size from
$n=3918$ to $n=201062$, and are taken from applications in chemical
processes, material science, graph problems, optimal control and model
reduction, thermal processes and circuit simulations.

For each sparsity pattern $G$, we design a corresponding $\Sigma^{-1}$
as follows. For each $(i,j)\in G$, we select $(\Sigma^{-1})_{i,j}=(\Sigma^{-1})_{j,i}$
from the uniform distribution in $[-1,1]$, and then corrupt it to
zero with probability $0.3$. Then, we set each diagonal to $(\Sigma^{-1})_{i,i}=1+\sum_{j}|(\Sigma^{-1})_{i,j}|$.
Using this $\Sigma$, we generate $N=5000$ samples i.i.d. as $\mathbf{x}_{1},\ldots,\mathbf{x}_{N}\sim\mathcal{N}(0,\Sigma)$.
This results in a sample covariance matrix $C=\frac{1}{N}\sum_{i=1}^{N}\mathbf{x}_{i}\mathbf{x}_{i}^{T}$.

We solve graphical lasso and RGL with the $C$ described above using
our proposed soft-thresholding-MDMC algorithm and \texttt{QUIC}, in
order to estimate $\Sigma^{-1}$. In the case of RGL, we assume that
the graph $G$ is known \emph{a priori}, while noting that 30\% of
the elements of $\Sigma^{-1}$ have been corrupted to zero. Our goal
here is to discover the location of these corrupted elements. In all
of our simulations, the threshold $\lambda$ is set so that the number
of nonzero elements in the the estimator is roughly the same as the
ground truth. We limit both algorithms to 3 hours of CPU time. 

Figure~\ref{fig:results}b compares the CPU time of both two algorithms
for this case study; the specific details are provided in Table~\ref{Table}.
A log-log regression results in an empirical time complexity of $O(n^{1.64})$
and $O(n^{1.55})$ for graphical lasso and RGL using our algorithm,
and $O(n^{2.46})$ and $O(n^{2.52})$ for the same using \texttt{QUIC}.
The exponents of our algorithm are $\ge1$ due to the initial soft-thresholding
step, which is quadratic-time on a serial computer, but $\le2$ because
the overall procedure is dominated by the solution of the MDMC. Both
algorithms solve graphs with $n\le1.5\times10^{4}$ within the allotted
time limit, though our algorithm is 11 times faster on average. Only
our algorithm is able to solve the estimation problem with $n\approx2\times10^{5}$
in a little more than an hour. 

To check whether thresholding-MDMC really does solve graphical lasso
and RGL, we substitute the two sets of estimators back into their
original problems (\ref{eq:gl}) and (\ref{eq:rgl}). The corresponding
objective values have a relative difference $\le4\times10^{-4}$,
suggesting that both sets of estimators are about equally optimal.
This observation verifies our claims in Theorem~\ref{thm:tt1} and
Corollary~\ref{cor1} that (\ref{eq:gl}) and (\ref{eq:rgl}): thresholding-MDMC
does indeed solve graphical lasso and RGL.

\section{Conclusions}

Graphical lasso is a widely-used approach for estimating a covariance
matrix with a sparse inverse from limited samples. In this paper,
we consider a slightly more general formulation called \emph{restricted}
graphical lasso (RGL), which additionally enforces a prior sparsity
pattern to the estimation. We describe an efficient approach that
substantially reduces the cost of solving RGL: 1) soft-thresholding
the sample covariance matrix and projecting onto the prior pattern,
to recover the estimator's sparsity pattern; and 2) solving a maximum
determinant matrix completion (MDMC) problem, to recover the estimator's
numerical values. The first step is quadratic $O(n^{2})$ time and
memory but embarrassingly parallelizable. If the resulting sparsity
pattern is \emph{sparse} and \emph{chordal}, then under mild technical
assumptions, the second step can be performed using the Newton-CG
algorithm described in this paper in linear $O(n)$ time and memory.
The algorithm is tested on both synthetic and real-life data, solving
instances with as many as 200,000 variables to 7-9 digits of accuracy
within an hour on a standard laptop computer.

\bibliographystyle{plain}
\bibliography{ncmc}

\begin{thebibliography}{10}

\bibitem{agrawal1993cutting}
Ajit Agrawal, Philip Klein, and R~Ravi.
\newblock Cutting down on fill using nested dissection: Provably good
  elimination orderings.
\newblock In {\em Graph Theory and Sparse Matrix Computation}, pages 31--55.
  Springer, 1993.

\bibitem{Andersen2010}
Martin~S Andersen, Joachim Dahl, and Lieven Vandenberghe.
\newblock Implementation of nonsymmetric interior-point methods for linear
  optimization over sparse matrix cones.
\newblock {\em Mathematical Programming Computation}, 2(3):167--201, 2010.

\bibitem{andersen2013cvxopt}
Martin~S Andersen, Joachim Dahl, and Lieven Vandenberghe.
\newblock {CVXOPT: A Python package for convex optimization}.
\newblock {\em Available at cvxopt. org}, 54, 2013.

\bibitem{andersen2013logarithmic}
Martin~S Andersen, Joachim Dahl, and Lieven Vandenberghe.
\newblock Logarithmic barriers for sparse matrix cones.
\newblock {\em Optimization Methods and Software}, 28(3):396--423, 2013.

\bibitem{Banerjee08}
Onureena Banerjee, Laurent~El Ghaoui, and Alexandre d'Aspremont.
\newblock Model selection through sparse maximum likelihood estimation for
  multivariate {Gaussian} or binary data.
\newblock {\em Journal of Machine learning research}, 9:485--516, 2008.

\bibitem{barrett1994templates}
Richard Barrett, Michael~W Berry, Tony~F Chan, James Demmel, June Donato, Jack
  Dongarra, Victor Eijkhout, Roldan Pozo, Charles Romine, and Henk Van~der
  Vorst.
\newblock {\em Templates for the solution of linear systems: building blocks
  for iterative methods}, volume~43.
\newblock Siam, 1994.

\bibitem{Croft10}
D.~Croft, G.~O'Kelly, G.~Wu, R.~Haw, M.~Gillespie, L.~Matthews, M.~Caudy,
  P.~Garapati, G.~Gopinath, B.~Jassal, and S.~Jupe.
\newblock Reactome: a database of reactions, pathways and biological processes.
\newblock {\em Nucleic acids research}, 39:691--697, 2010.

\bibitem{dahl2008covariance}
Joachim Dahl, Lieven Vandenberghe, and Vwani Roychowdhury.
\newblock Covariance selection for nonchordal graphs via chordal embedding.
\newblock {\em Optimization Methods \& Software}, 23(4):501--520, 2008.

\bibitem{Sparse11}
Timothy~A. Davis and Yifan Hu.
\newblock The university of florida sparse matrix collection.
\newblock {\em ACM Transactions on Mathematical Software (TOMS)}, 38(1):1,
  2011.

\bibitem{Steve93}
Steven~N. Durlauf.
\newblock Nonergodic economic growth.
\newblock {\em The Review of Economic Studies}, 60(2):349--366, 1993.

\bibitem{Ortega17}
Hilmi~E. Egilmez, Eduardo Pavez, and Antonio Ortega.
\newblock Graph learning from data under laplacian and structural constraints.
\newblock {\em IEEE Journal of Selected Topics in Signal Processing},
  11(6):825--841, 2017.

\bibitem{Salar17}
Salar Fattahi and Somayeh Sojoudi.
\newblock Graphical lasso and thresholding: Equivalence and closed-form
  solutions.
\newblock {\em \url{https://arxiv.org/abs/1708.09479}}, 2017.

\bibitem{Salar18}
Salar Fattahi, Richard~Y Zhang, and Somayeh Sojoudi.
\newblock Sparse inverse covariance estimation for chordal structures.
\newblock {\em \url{https://arxiv.org/abs/1711.09131}}, 2018.

\bibitem{Friedman08}
Jerome Friedman, Trevor Hastie, and Robert Tibshirani.
\newblock Sparse inverse covariance estimation with the graphical lasso.
\newblock {\em Biostatistics}, 9(3):432--441, 2008.

\bibitem{george1989evolution}
Alan George and Joseph~WH Liu.
\newblock The evolution of the minimum degree ordering algorithm.
\newblock {\em Siam review}, 31(1):1--19, 1989.

\bibitem{gilbert1988some}
John~Russell Gilbert.
\newblock Some nested dissection order is nearly optimal.
\newblock {\em Information Processing Letters}, 26(6):325--328, 1988.

\bibitem{Maxim15}
Maxim Grechkin, Maryam Fazel, Daniela~M. Witten, and Su-In Lee.
\newblock Pathway graphical lasso.
\newblock {\em AAAI}, pages 2617--2623, 2015.

\bibitem{greenbaum1997iterative}
Anne Greenbaum.
\newblock {\em Iterative methods for solving linear systems}, volume~17.
\newblock Siam, 1997.

\bibitem{Honorio09}
Jean Honorio, Dimitris Samaras, Nikos Paragios, Rita Goldstein, and Luis~E.
  Ortiz.
\newblock Sparse and locally constant gaussian graphical models.
\newblock {\em Advances in Neural Information Processing Systems}, pages
  745--753, 2009.

\bibitem{Hsieh14}
C.~J. Hsieh, M.~A. Sustik, I.~S. Dhillon, and P.~Ravikumar.
\newblock Quic: quadratic approximation for sparse inverse covariance
  estimation.
\newblock {\em Journal of Machine Learning Research}, 15(1):2911--2947, 2014.

\bibitem{hsieh2013big}
Cho-Jui Hsieh, M{\'a}ty{\'a}s~A Sustik, Inderjit~S Dhillon, Pradeep~K
  Ravikumar, and Russell Poldrack.
\newblock Big \& quic: Sparse inverse covariance estimation for a million
  variables.
\newblock In {\em Advances in neural information processing systems}, pages
  3165--3173, 2013.

\bibitem{Huang10}
Junzhou Huang and Tong Zhang.
\newblock The benefit of group sparsity.
\newblock {\em The Annals of Statistics}, 38(4):1978--2004, 2010.

\bibitem{li2017inexact}
Jinchao Li, Martin~S Andersen, and Lieven Vandenberghe.
\newblock Inexact proximal newton methods for self-concordant functions.
\newblock {\em Mathematical Methods of Operations Research}, 85(1):19--41,
  2017.

\bibitem{Li94}
Stan~Z. Li.
\newblock Markov random field models in computer vision.
\newblock {\em European conference on computer vision}, pages 351--370, 1994.

\bibitem{Manning99}
Christopher~D. Manning and Hinrich Sch{\"u}tze.
\newblock {\em Foundations of statistical natural language processing}.
\newblock MIT press, 1999.

\bibitem{Mazumdar12}
Rahul Mazumder and Trevor Hastie.
\newblock Exact covariance thresholding into connected components for
  large-scale graphical lasso.
\newblock {\em Journal of Machine Learning Research}, 13:781--794, 2012.

\bibitem{Mein06}
Nicolai Meinshausen and Peter B{\"u}hlmann.
\newblock High-dimensional graphs and variable selection with the lasso.
\newblock {\em The annals of statistics}, 1436--1462, 2006.

\bibitem{Milanfar13}
Peyman Milanfar.
\newblock A tour of modern image filtering: New insights and methods, both
  practical and theoretical.
\newblock {\em IEEE Signal Processing Magazine}, 30(1):106--128, 2013.

\bibitem{Martin082}
Sahand Negahban and Martin~J. Wainwright.
\newblock Joint support recovery under high-dimensional scaling: Benefits and
  perils of $l_{1,\infty}$-regularization.
\newblock {\em Proceedings of the 21st International Conference on Neural
  Information Processing Systems}, pages 1161--1168, 2008.

\bibitem{nesterov2013introductory}
Yurii Nesterov.
\newblock {\em Introductory lectures on convex optimization: A basic course},
  volume~87.
\newblock Springer Science \& Business Media, 2013.

\bibitem{Martin08}
Guillaume Obozinski, Martin~J. Wainwright, and Michael~I. Jordan.
\newblock Union support recovery in high-dimensional multivariate regression.
\newblock {\em Communication, Control, and Computing, 2008 46th Annual Allerton
  Conference on}, pages 21--26, 2008.

\bibitem{oztoprak2012newton}
Figen Oztoprak, Jorge Nocedal, Steven Rennie, and Peder~A Olsen.
\newblock Newton-like methods for sparse inverse covariance estimation.
\newblock In {\em Advances in neural information processing systems}, pages
  755--763, 2012.

\bibitem{Park99}
Dongjoo Park and Laurence~R. Rilett.
\newblock Forecasting freeway link travel times with a multilayer feedforward
  neural network.
\newblock {\em Computer Aided Civil and Infrastructure Engineering},
  14(5):357--367, 1999.

\bibitem{Martin11}
P.~Ravikumar, M.~J. Wainwright, G.~Raskutti, and B.~Yu.
\newblock High-dimensional covariance estimation by minimizing $l_1$-penalized
  log-determinant divergence.
\newblock {\em Electronic Journal of Statistics}, 5:935--980, 2011.

\bibitem{rolfs2012iterative}
Benjamin Rolfs, Bala Rajaratnam, Dominique Guillot, Ian Wong, and Arian Maleki.
\newblock Iterative thresholding algorithm for sparse inverse covariance
  estimation.
\newblock In {\em Advances in Neural Information Processing Systems}, pages
  1574--1582, 2012.

\bibitem{rose1970triangulated}
Donald~J Rose.
\newblock Triangulated graphs and the elimination process.
\newblock {\em Journal of Mathematical Analysis and Applications},
  32(3):597--609, 1970.

\bibitem{Rothman08}
Adam~J. Rothman, Peter~J. Bickel, Elizaveta Levina, and Ji~Zhu.
\newblock Sparse permutation invariant covariance estimation.
\newblock {\em Electronic Journal of Statistics}, 2:494--515, 2008.

\bibitem{saad2003iterative}
Yousef Saad.
\newblock {\em Iterative methods for sparse linear systems}, volume~82.
\newblock Siam, 2003.

\bibitem{Sojoudi16}
Somayeh Sojoudi.
\newblock Equivalence of graphical lasso and thresholding for sparse graphs.
\newblock {\em Journal of Machine Learning Research}, 17(115):1--21, 2016.

\bibitem{treister2014block}
Eran Treister and Javier~S Turek.
\newblock A block-coordinate descent approach for large-scale sparse inverse
  covariance estimation.
\newblock In {\em Advances in neural information processing systems}, pages
  927--935, 2014.

\bibitem{vandenberghe2015chordal}
Lieven Vandenberghe, Martin~S Andersen, et~al.
\newblock Chordal graphs and semidefinite optimization.
\newblock {\em Foundations and Trends{\textregistered} in Optimization},
  1(4):241--433, 2015.

\bibitem{Martin09}
Martin~J Wainwright.
\newblock Sharp thresholds for high-dimensional and noisy sparsity recovery
  using $\ell _ {1} $-constrained quadratic programming (lasso).
\newblock {\em IEEE transactions on information theory}, 55(5):2183--2202,
  2009.

\bibitem{Yang14}
Eunho Yang, Aur{\'e}lie~C. Lozano, and Pradeep~K. Ravikumar.
\newblock Elementary estimators for graphical models.
\newblock {\em Advances in neural information processing systems}, pages
  2159--2167, 2014.

\bibitem{yannakakis1981computing}
Mihalis Yannakakis.
\newblock Computing the minimum fill-in is {NP}-complete.
\newblock {\em SIAM Journal on Algebraic Discrete Methods}, 2(1):77--79, 1981.

\bibitem{Ming07}
Ming Yuan and Yi~Lin.
\newblock Model selection and estimation in the {Gaussian} graphical model.
\newblock {\em Biometrika}, pages 19--35, 2007.

\end{thebibliography}

\global\long\def\A{\mathbf{A}}
\global\long\def\K{\mathcal{K}}
\global\long\def\R{\mathbb{R}}
\global\long\def\S{\mathbb{S}}
\global\long\def\dom{\mathrm{dom}\,}
\global\long\def\cond{\mathrm{cond}\,}
\global\long\def\vector{\mathrm{vec}\,}
\global\long\def\tr{\mathrm{tr}\,}

\appendix

\section{\label{sec:appendRGL}Restricted graphical Lasso and MDMC}

\global\long\def\tr{\mathrm{tr}\,}
Our aim is to elucidate the connection between the RGL and MDMC problem
under the assumption that the regularization coefficients are chosen
to be large, i.e., when a \emph{sparse} solution for the RGL is sought.
Recall that RGL is formulated as follows \begin{subequations}\label{RGL}
\begin{align}
{\text{minimize }} & \tr CX-\log\det X+\sum_{(i,j)\in V}\lambda_{ij}|X_{i,j}|\\
\text{s.t.}\  & \ X\in\mathbb{S}_{V}^{n}\label{sparsity}\\
 & \ X\succ0
\end{align}
\end{subequations}

Now, consider the following modified soft-thresholded sample covariance
matrix 
\begin{equation}
(C_{\lambda})_{i,j}=\begin{cases}
C_{i,j} & i=j\\
0 & i\not=j\ \text{and}\ (i,j)\not\in V\\
0 & i\not=j\ \text{and}\ (i,j)\in V\ \text{and}\ -\lambda_{i,j}\le,C_{i,j}\le\lambda_{i,j}\\
C_{i,j}-\lambda_{i,j}\ \mathrm{sign}(C_{i,j}) & i\not=j\ \text{and}\ (i,j)\in V\ \text{and}\ |C_{i,j}|>\lambda_{i,j}
\end{cases}\label{thresh2-1}
\end{equation}

\begin{defn}[\cite{Salar17}]
\label{def:dd1-1} For a given symmetric matrix $M$, let $V_{M}$
denote the minimal set such that $M\in\mathbb{S}_{V_{M}}^{n}$. $M$
is called \textbf{inverse-consistent} if there exists a matrix $N$
with zero diagonal such that \begin{subequations} 
\begin{align}
 & M+N\succ0\\
 & N_{i,j}=0\quad\forall(i,j)\notin V_{M}\\
 & (M+N)^{-1}\in\mathbb{S}_{V_{M}}^{n}
\end{align}
\end{subequations} The matrix $N$ is called an \textbf{inverse-consistent
complement } of $M$ and is denoted by $M^{(c)}$. Furthermore, $M$
is called \textbf{sign-consistent} if for every $(i,j)\in V_{M}$,
the $(i,j)$ entries of $M$ and $(M+M^{(c)})^{-1}$ have opposite
signs. 
\end{defn}
\begin{defn}[\cite{Salar17}]
Given a sparsity pattern $V$ and a scalar $\alpha$, define $\beta(V,\alpha)$
as the maximum of $\|M^{(c)}\|_{\max}$ over all inverse-consistent
positive-definite matrices $M$ with the diagonal entries all equal
to 1 such that $M\in\mathbb{S}_{V}^{n}$ and $\|M\|_{\max}\leq\alpha$. 
\end{defn}
Without loss of generality, we make the following mild assumption.
\begin{assume}\label{ass1} $\lambda_{i,j}\not=|C_{i,j}|$ and $\lambda_{i,j}>0$
for every $(i,j)\in V$. \end{assume}

We are now ready to restate Theorem~\ref{thm:tt1}.

\rglthm*

First, note that the diagonal elements of $\tilde{C}_{\lambda}$ are
$1$ and its off-diagonal elements are between $-1$ and $1$. A sparse
solution for RGL requires large regularization coefficients. This
leads to numerous zero elements in $\tilde{C}_{\lambda}$ and forces
the magnitude of the nonzero elements to be small. This means that,
in most instances, $\tilde{C}_{\lambda}$ is positive definite or
even diagonally dominant. Certifying Condition (ii) is hard in general.
However,~\cite{Salar17} shows that this condition is automatically
implied by Condition (i) when $V_{C_{\lambda}}$ induces an acyclic
structure. More generally,~\cite{Sojoudi16} shows that $\tilde{C}_{\lambda}$
is sign-consistent if $(\tilde{C}_{\lambda}+\tilde{C}_{\lambda}^{(c)})^{-1}$
is close to its first order Taylor expansion. This assumption holds
in practice due to the fact that the magnitude of the off-diagonal
elements of $\tilde{C}_{\lambda}+\tilde{C}_{\lambda}^{(c)}$ is small.
Furthermore,~\cite{Salar18} proves that this condition is necessary
for the equivalence between the sparsity patterns of thresholding
and GL when the regularization matrix is large enough. Finally,~\cite{Salar17}
shows that the left hand side of~(\ref{eqbeta}) is upper bounded
by $c\times\|\tilde{C}_{\lambda}\|_{\max}^{2}$ for some $c>0$ which
only depends on $V_{C_{\lambda}}$. This implies that, when $\|\tilde{C}_{\lambda}\|_{\max}$
is small, or equivalently the regularization matrix is large, Condition
(iii) is automatically satisfied.

\subsection{Proofs}

In this section, we present the technical proofs of our theorems.
To prove Theorem~\ref{thm:tt1}, we need a number of lemmas. First,
consider the RGL problem, with $\mathbb{S}_{V}^{n}=\mathbb{S}^{n}$.
The first lemma offers optimality (KKT) conditions for the unique
solution of this problem. 
\begin{lem}
\label{expsol} $X^{*}$ is the optimal solution of RGL problem with
$\mathbb{S}_{V}^{n}=\mathbb{S}^{n}$ if and only if it satisfies the
following conditions for every $i,j\in\{1,2,...,n\}$: \begin{subequations}\label{optcon}
\begin{align}
 & (X^{*})_{i,j}^{-1}=C_{i,j} &  & \text{if}\quad i=j\\
 & (X^{*})_{i,j}^{-1}=C_{i,j}+\lambda_{i,j}\times\text{{\rm sign}}(X_{i,j}^{*}) &  & \text{if}\quad X_{i,j}^{*}\not=0\\
 & C_{i,j}-\lambda_{i,j}\leq(X^{*})_{i,j}^{-1}\leq\Sigma_{i,j}+\lambda_{i,j} &  & \text{if}\quad X_{i,j}^{*}=0
\end{align}
\end{subequations} where $(X^{*})_{i,j}^{-1}$ denotes the $(i,j)^{\text{th}}$
entry of $(X^{*})^{-1}$. 
\end{lem}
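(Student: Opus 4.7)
My plan is to treat this as a standard first-order optimality derivation for a convex non-smooth program. The key observation is that the RGL objective $F(X)=\tr CX-\log\det X+\sum_{(i,j)\in V}\lambda_{ij}|X_{i,j}|$ is strictly convex on $\S^{n}_{++}$ (the $-\log\det$ part is strictly convex there, and the other two terms are convex), so it has a unique minimizer; the constraint $X\succ 0$ is open, so optimality reduces to the interior first-order condition $0\in\partial F(X^{*})$. Everything then follows from computing this subdifferential entrywise and matching it against the three cases in~\eqref{optcon}.

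Concretely, I would decompose $F=F_{\mathrm{sm}}+F_{\mathrm{ns}}$ with $F_{\mathrm{sm}}(X)=\tr CX-\log\det X$ smooth and $F_{\mathrm{ns}}(X)=\sum_{(i,j)\in V}\lambda_{ij}|X_{i,j}|$ non-smooth and separable across matrix entries. Using the trace inner product on $\S^{n}$, the gradient $\nabla F_{\mathrm{sm}}(X^{*})$ equals $C-(X^{*})^{-1}$. The subdifferential of $F_{\mathrm{ns}}$ at entry $(i,j)$ is $\{\lambda_{ij}\,\mathrm{sign}(X^{*}_{i,j})\}$ when $X^{*}_{i,j}\neq 0$, the interval $[-\lambda_{ij},\lambda_{ij}]$ when $X^{*}_{i,j}=0$, and trivially $\{0\}$ at diagonal entries (which are not penalized). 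Setting $0\in\partial F(X^{*})$ componentwise then yields exactly~\eqref{optcon}: the diagonal equation from the absence of any $\ell_{1}$ contribution, the signed equality at nonzero off-diagonals, and the two-sided inclusion at zero off-diagonals. Conversely, any $X^{*}\succ 0$ fulfilling those conditions admits a selection from $\partial F_{\mathrm{ns}}(X^{*})$ that sums with $C-(X^{*})^{-1}$ to zero, so by convexity $X^{*}$ is optimal.

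The main bookkeeping hazard is symmetry: since $X$ lives in $\S^{n}$ rather than $\R^{n\times n}$, a perturbation at $(i,j)$ automatically perturbs $(j,i)$, so the convention for $\lambda_{ij}$ (summing once over unordered pairs versus twice with a factor of $1/2$, cf.~\eqref{eq:rgl}) must be kept consistent with the inner product used when taking $\nabla F_{\mathrm{sm}}$. Once these conventions are aligned, the computation is routine subdifferential calculus. Note that Assumption~\ref{ass1} plays no role here; it merely rules out a boundary degeneracy that becomes relevant later, in the equivalence-with-thresholding argument supporting Theorem~\ref{thm:tt1}.
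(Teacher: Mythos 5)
Your proof is correct and is precisely the standard argument the authors had in mind: the paper itself omits the proof of Lemma~\ref{expsol} as ``straightforward,'' and the routine computation is exactly the one you give, namely $0\in C-(X^{*})^{-1}+\partial F_{\mathrm{ns}}(X^{*})$ read off entrywise, with the three cases of \eqref{optcon} corresponding to unpenalized diagonal entries, nonzero off-diagonal entries (where the subdifferential of $|\cdot|$ is the singleton $\{\lambda_{ij}\,\mathrm{sign}(X^{*}_{i,j})\}$), and zero off-diagonal entries (where it is $[-\lambda_{ij},\lambda_{ij}]$). Your remarks on the symmetry convention and on the irrelevance of Assumption~\ref{ass1} here are both accurate; the $\Sigma_{i,j}$ in the third condition of the lemma is evidently a typo for $C_{i,j}$, consistent with your derivation.
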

\begin{proof}
The proof is straightforward and omitted for brevity. 
\end{proof}
Now, consider the following optimization: 
\begin{equation}
\min_{{X}\in\mathbb{S}_{+}^{n}}-\log\det({X})+\mathrm{trace}(\tilde{C}{X})+\sum_{(i,j)\in V}\tilde{\lambda}_{i,j}|{X}_{i,j}|+2\max_{k}\{{C}_{k,k}\}\sum_{(i,j)\in V^{(c)}}|{X}_{i,j}|\label{RGL2}
\end{equation}
where 
\begin{align}
\tilde{C}_{i,j}=\frac{C_{i,j}}{\sqrt{C_{i,i}\times C_{j,j}}}\quad\tilde{\lambda}_{i,j}=\frac{\lambda_{i,j}}{\sqrt{C_{i,i}\times C_{j,j}}}
\end{align}
Let $\tilde{X}$ denotes the optimal solution of~\eqref{RGL2}. Furthermore,
define $D$ as a diagonal matrix with $D_{i,i}=C_{i,i}$ for every
$i\in\{1,2,...,n\}$. The following lemma relates $\tilde{X}$ to
$X^{*}$. 
\begin{lem}
\label{l3} We have $X^{*}=D^{-1/2}\times\tilde{X}\times D^{-1/2}$. 
\end{lem}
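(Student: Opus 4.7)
Lemma~\ref{l3} is a bijective change-of-variable identity between two strictly convex programs. I would prove it by substituting $\tilde X = D^{1/2} X D^{1/2}$ into the RGL objective (with the trivial sparsity constraint $\mathbb{S}_V^n = \mathbb{S}^n$), showing that the resulting objective coincides, up to an additive constant, with the objective of problem~(\ref{RGL2}), and then appealing to uniqueness of the minimizers of each strictly convex program.

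First, under the substitution $\tilde X = D^{1/2} X D^{1/2}$ (equivalently $X = D^{-1/2} \tilde X D^{-1/2}$), I would verify the three elementary identities
\[
\tr C X = \tr\bigl((D^{-1/2} C D^{-1/2})\tilde X\bigr) = \tr \tilde C \tilde X, \qquad -\log\det X = -\log\det \tilde X + \log\det D,
\]
together with $\lambda_{i,j}|X_{i,j}| = \tilde\lambda_{i,j}|\tilde X_{i,j}|$, which follows directly from $\tilde\lambda_{i,j} = \lambda_{i,j}/\sqrt{C_{i,i}C_{j,j}}$ and $\tilde X_{i,j} = \sqrt{C_{i,i}C_{j,j}}\,X_{i,j}$. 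Combined, these transform the RGL objective at $X$ into the objective of~(\ref{RGL2}) at $\tilde X$, up to the additive constant $\log\det D$. Because we are in the setting $\mathbb{S}_V^n = \mathbb{S}^n$, i.e., $V = \{1,\ldots,n\}^2$, the complement $V^{(c)}$ is empty, so the extra penalty term $2\max_k\{C_{k,k}\}\sum_{(i,j)\in V^{(c)}}|X_{i,j}|$ in~(\ref{RGL2}) is vacuous and makes no contribution to the objective.

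Next, I would observe that the map $X\mapsto D^{1/2} X D^{1/2}$ is a bijection of $\mathbb{S}^n_{++}$ onto itself (since $D\succ 0$), so optimizing over $X\succ 0$ is equivalent to optimizing over $\tilde X\succ 0$. Both objectives are strictly convex on their respective open domains---strict convexity of $-\log\det$ on $\mathbb{S}^n_{++}$ dominates the linear and convex-piecewise-linear terms---so each program admits a unique minimizer. Since the two objectives differ only by a constant under the bijection, their minimizers correspond: $\tilde X = D^{1/2} X^* D^{1/2}$, which rearranges to the claimed identity $X^* = D^{-1/2}\tilde X D^{-1/2}$.

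\textbf{Main obstacle.} There is no real obstacle; the entire proof is a routine algebraic verification together with an appeal to strict convexity. The one point that deserves care is the bookkeeping around $V^{(c)}$: one must explicitly note that in the present lemma we are in the unconstrained-RGL setting $V = \{1,\ldots,n\}^2$, so that the auxiliary penalty on $V^{(c)}$ in the definition of~(\ref{RGL2}) drops out and the two problems become exact rescalings of one another.
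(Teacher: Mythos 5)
Your change-of-variables computation (the three identities for $\tr CX$, $-\log\det X$, and the weighted $\ell_{1}$ terms) is correct and reproduces the second half of the paper's argument, where \eqref{RGL3} is rescaled into \eqref{RGL2} via $\tilde{X}=D^{1/2}XD^{1/2}$. But the premise on which you dismiss the remaining penalty term is wrong, and that term is where the actual content of the lemma lies. In Lemma~\ref{l3} the pattern $V$ is the \emph{a priori} restriction pattern of the RGL problem \eqref{RGL}, not all of $\{1,\ldots,n\}^{2}$; only the auxiliary KKT result, Lemma~\ref{expsol}, is stated in the unconstrained setting $\mathbb{S}_{V}^{n}=\mathbb{S}^{n}$. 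So $V^{(c)}$ is generally nonempty, the penalty $2\max_{k}\{C_{k,k}\}\sum_{(i,j)\in V^{(c)}}|X_{i,j}|$ does not vanish, and $X^{*}$ (the solution of the \emph{constrained} problem \eqref{RGL}) and $\tilde{X}$ (the solution of the \emph{unconstrained} penalized problem \eqref{RGL2}) a priori live in different feasible sets. A pure rescaling argument cannot identify them.

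The missing idea is an exact-penalty step. The paper introduces the intermediate unconstrained problem \eqref{RGL3}, which keeps the data of \eqref{RGL} but replaces the hard constraint $X\in\mathbb{S}_{V}^{n}$ by the penalty $2\max_{k}\{C_{k,k}\}\sum_{(i,j)\in V^{(c)}}|X_{i,j}|$, and proves that its minimizer $X^{\sharp}$ automatically satisfies $X_{i,j}^{\sharp}=0$ for all $(i,j)\in V^{(c)}$, hence is feasible for \eqref{RGL} and equals $X^{*}$ by uniqueness. That feasibility claim is not routine: assuming $X_{i,j}^{\sharp}\neq0$ for some $(i,j)\in V^{(c)}$, the KKT conditions of Lemma~\ref{expsol} give $(X^{\sharp})_{i,j}^{-1}=C_{i,j}+2\max_{k}\{C_{k,k}\}\,\mathrm{sign}(X_{i,j}^{\sharp})$; combining $|C_{i,j}|\le\max_{k}\{C_{k,k}\}$ (from $C\succeq0$) with $(X^{\sharp})_{i,i}^{-1}=C_{i,i}$ and $(X^{\sharp})_{j,j}^{-1}=C_{j,j}$ forces the $2\times2$ principal minor of $(X^{\sharp})^{-1}$ indexed by $\{i,j\}$ to be nonpositive, contradicting $(X^{\sharp})^{-1}\succ0$. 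Only after this identification does the rescaling you describe apply. As written, your proposal proves a different and essentially trivial statement; it does not establish Lemma~\ref{l3}.
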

\begin{proof}
To prove this lemma, we define an intermediate optimization problem.
Consider 
\begin{equation}
\min_{{X}\in\mathbb{S}_{+}^{n}}f(X)=-\log\det({X})+\mathrm{trace}({\Sigma}{X})+\sum_{(i,j)\in V}{\lambda}_{i,j}|{X}_{i,j}|+2\max_{k}\{{C}_{kk}\}\sum_{(i,j)\in V^{(c)}}|{X}_{i,j}|\label{RGL3}
\end{equation}
Denote $X^{\sharp}$ as the optimal solution for~\eqref{RGL3}. First,
we show that $X^{\sharp}=X^{*}$. Trivially, $X^{*}$ is a feasible
solution for~\eqref{RGL3} and hence $f(X^{\sharp})\leq f(X^{*})$.
Now, we prove that $X^{\sharp}$ is a feasible solution for~\eqref{RGL}.
To this goal, we show that $X_{ij}^{\sharp}=0$ for every $(i,j)\in V^{(c)}$.
By contradiction, suppose $X_{ij}^{\sharp}\not=0$ for some $(i,j)\in V^{(c)}$.
Note that, due to the positive definiteness of ${X^{\sharp}}^{-1}$,
we have 
\begin{equation}
(X^{\sharp})_{i,i}^{-1}\times(X^{\sharp})_{j,j}^{-1}-((X^{\sharp})_{i,j}^{-1})^{2}>0\label{pdsub}
\end{equation}
Now, based on Lemma~\ref{expsol}, one can write 
\begin{equation}
(X^{\sharp})_{ij}^{-1}=C_{i,j}+2\max_{k}\{C_{k,k}\}\times\mathrm{sign}(X_{i,j}^{\sharp})\label{hatij}
\end{equation}
Considering the fact that $C\succeq0$, we have $|C_{i,j}|\leq\max_{k}\{C_{k,k}\}$.
Together with~\eqref{hatij}, this implies that $|(X^{\sharp})_{i,j}^{-1}|\geq\max_{k}\{C_{k,k}\}$.
Furthermore, due to Lemma~\ref{expsol}, one can write $(X^{\sharp})_{i,i}^{-1}=C_{i,i}$
and $(X^{\sharp})_{j,j}^{-1}=C_{jj}$. This leads to 
\begin{equation}
(X^{\sharp})_{i,i}^{-1}\times(X^{\sharp})_{j,j}^{-1}-((X^{\sharp})_{i,j}^{-1})^{2}=C_{i,i}\times C_{j,j}-(\max_{k}\{C_{k,k}))^{2}\leq0
\end{equation}
which contradicts with~\eqref{pdsub}. Therefore, $X^{\sharp}$ is
a feasible solution for~\eqref{RGL}. This implies that $f(X^{\sharp})\geq f(X^{*})$
and hence, $f(X^{*})=f(X^{\sharp})$. Due to the uniqueness of the
solution of~\eqref{RGL3}, we have $X^{*}=X^{\sharp}$. Now, note
that~\eqref{RGL3} can be reformulated as 
\begin{align}
\min_{{X}\in\mathbb{S}_{+}^{n}}-\log\det({X})+\mathrm{trace}(\tilde{C}D^{1/2}{X}D^{1/2})+\sum_{(i,j)\in V}{\lambda}_{i,j}|{X}_{i,j}|+2\max_{k}\{{C}_{k,k}\}\sum_{(i,j)\in V^{(c)}}|{X}_{ij}|
\end{align}
Upon defining 
\begin{equation}
\tilde{X}=D^{1/2}XD^{1/2}\label{Stilde}
\end{equation}
and following some algebra, one can verify that~\ref{RGL3} is equivalent
to 
\begin{equation}
\min_{\tilde{X}\in\mathbb{S}_{+}^{n}}-\log\det(\tilde{X})+\mathrm{trace}(\tilde{C}\tilde{X})+\sum_{(i,j)\in V}\tilde{\lambda}_{i,j}|\tilde{X}_{i,j}|+2\max_{k}\{\tilde{C}_{k,k}\}\sum_{(i,j)\in V^{(c)}}|\tilde{X}_{i,j}|+\log\det(D)\label{RGL4}
\end{equation}
Dropping the constant term in~\eqref{RGL4} gives rise to the optimization~\eqref{RGL2}.
Therefore, $X^{*}=D^{-1/2}\times\tilde{X}\times D^{-1/2}$ holds in
light of~\ref{Stilde}. This completes the proof. 
\end{proof}
Now, we present the proof of Theorem~\ref{thm:tt1}.\vspace*{2mm}

\textit{Proof of Theorem~\ref{thm:tt1}:} Note that, due to the definition
of $\tilde{C}_{\lambda}$ and Lemma~\ref{l3}, $\tilde{C}_{\lambda}$
and $\tilde{X}$ have the same signed sparsity pattern as ${C}_{\lambda}$
and ${X}^{*}$, respectively. Therefore, it suffices to show that
the signed sparsity structures of $\tilde{C}_{\lambda}$ and $\tilde{X}$
are the same.

To verify this, we focus on the optimality conditions for optimization~\eqref{RGL2}.
Due to Condition~(1-i), $\tilde{C}_{\lambda}$ is inverse-consistent
and has a unique inverse-consistent complement, which is denoted by
$N$. First, we will show that $(\tilde{C}_{\lambda}+N)^{-1}$ is
the optimal solution of~\eqref{RGL2}. For an arbitrary pair $(i,j)\in\{1,...,n\}$,
the KKT conditions, introduced in Lemma~\ref{expsol}, imply that
one of the following cases holds: 
\begin{itemize}
\item[1)] $i=j$: We have $(\tilde{C}_{\lambda}+N)_{i,j}=[\tilde{C}_{\lambda}]_{i,i}=\tilde{C}_{i,i}$. 
\item[2)] $(i,j)\in V_{C_{\lambda}}$: In this case, we have 
\begin{equation}
(\tilde{C}_{\lambda}+N)_{ij}=[\tilde{C}_{\lambda}]_{ij}=\tilde{C}_{ij}-\tilde{\lambda}_{ij}\times\mathrm{sign}(\tilde{C}_{ij})
\end{equation}
Note that since $|\tilde{C}_{ij}|>\tilde{\lambda}_{ij}$, we have
that $\mathrm{sign}([\tilde{C}_{\lambda}]_{ij})=\mathrm{sign}(\tilde{C}_{ij})$.
On the other hand, due to the sign-consistency of $\tilde{C}_{\lambda}$,
we have $\mathrm{sign}([\tilde{C}_{\lambda}]_{ij})=-\mathrm{sign}\left(\left((\tilde{C}_{\lambda}+N)^{-1}\right)_{ij}\right)$.
This implies that 
\begin{equation}
(\tilde{C}_{\lambda}+N)_{ij}=\tilde{C}_{ij}+\tilde{\lambda}_{ij}\times\mathrm{sign}\left(\left((\tilde{C}_{\lambda}+N)^{-1}\right)_{ij}\right)
\end{equation}
\item[3)] $(i,j)\not\in V_{C_{\lambda}}$: One can verify that $(\tilde{C}_{\lambda}+N)_{ij}=N_{ij}$.
Therefore, due to Condition (1-iii), we have 
\begin{equation}
\begin{aligned}|(\tilde{C}_{\lambda}+N)_{ij}| & \leq\beta\left(V_{C_{\lambda}},\|\tilde{C}_{\lambda}\|_{\max}\right)\\
 & \leq\min_{(k,l)\in V^{(c)}}\frac{\lambda_{kl}-|C_{kl}|}{\sqrt{C_{kk}\times C_{ll}}}\\
 & =\min_{(k,l)\in V^{(c)}}\tilde{\lambda}_{kl}-|\tilde{C}_{kl}|
\end{aligned}
\label{eq39}
\end{equation}
This leads to 
\begin{equation}
|(\tilde{C}_{\lambda}+N)_{ij}-\tilde{C}_{ij}|\leq|(\tilde{C}_{\lambda}+N)_{ij}|+|\tilde{C}_{ij}|\leq\min_{(k,l)\in V^{(c)}}(\tilde{\lambda}_{kl}-|\tilde{C}_{kl}|)+|\tilde{C}_{ij}|\leq\tilde{\lambda}_{ij}
\end{equation}
\end{itemize}
Therefore, it can be concluded that $(\tilde{C}_{\lambda}+N)^{-1}$
satisfies the KKT conditions for~\eqref{RGL2}. On the other hand,
note that $V_{(\tilde{C}_{\lambda}+N)^{-1}}=V_{\tilde{C}_{\lambda}}$
and the nonzero off-diagonal elements of $(\tilde{C}_{\lambda}+N)^{-1})$
and $\tilde{C}_{\lambda}$ have opposite signs. This concludes the
proof.~\hfill{}$\square$

\section{\label{sec:appenCond}Solving the Newton Subproblem in $O(1)$ CG
Iterations}

Let $\S^{n}$ be the set of $n\times n$ real symmetric matrices.
Given a sparsity pattern $V$, we define $\S_{V}^{n}\subseteq\S^{n}$
as the set of $n\times n$ real symmetric matrices with this sparsity
pattern. We consider the following minimization problem
\begin{equation}
\hat{y}\equiv\arg\min_{y\in\R^{m}}g(y)\equiv f_{*}(C-A(y)).\label{eq:orig_prob-1}
\end{equation}
Here, the problem data $A:\R^{m}\to\S_{F}^{n}$ is an orthogonal basis
for a sub-sparsity pattern $F\subset V$ that excludes the matrix
$C$. In other words, the operator $A$ satisfies
\begin{align*}
A(A^{T}(X)) & =P_{F}(X)\qquad\forall X\in S_{V}^{n}, & A^{T}(C) & =0.
\end{align*}
The penalty function $f_{*}$ is the convex conjugate of the ``log-det''
barrier function on $\S_{V}^{n}$:

\[
f_{*}(S)=-\min_{X\in\S_{V}^{n}}\{S\bullet X-\log\det X\}
\]
Assuming that $V$ is \emph{chordal}, the function $f_{*}(S)$, its
gradient $\nabla f_{*}(S)$, and its Hessian matrix-vector product
$\nabla^{2}f_{*}(S)[Y]$ can all be evaluated in closed-form~\cite{dahl2008covariance,Andersen2010,andersen2013logarithmic};
see also~\cite{vandenberghe2015chordal}. Furthermore, if the pattern
is \emph{sparse}, i.e. its number of elements in the pattern satisfy
$|V|=O(n)$, then all of these operations can be performed to arbitrary
accuracy in $O(n)$ time and memory.

It is standard to solve (\ref{eq:orig_prob-1}) using Newton's method.
Starting from some initial point $y_{0}\in\dom g$
\[
y_{k+1}=y_{k}+\alpha_{k}\Delta y_{k}\qquad\Delta y_{k}\equiv-\nabla^{2}g(y_{k})^{-1}\nabla g(y_{k}),
\]
in which the step-size $\alpha_{k}$ is determined by backtracking
line search, selecting the first instance of the sequence $\{1,\rho,\rho^{2},\rho^{3},\dots\}$
that satisfies the Armijo\textendash Goldstein condition
\[
g(y+\alpha\Delta y)\le g(y)+\gamma\alpha\Delta y^{T}\nabla g(y),
\]
in which $\gamma\in(0,0.5)$ and $\rho\in(0,1)$. The function $f_{*}$
is strongly self-concordant, and $g$ inherits this property from
$f_{*}$. Accordingly, classical analysis shows that we require at
most 
\[
\left\lceil \frac{g(y_{0})-g(\hat{y})}{0.05\gamma\rho}+\log_{2}\log_{2}(1/\epsilon)\right\rceil \approx O(1)\text{ Newton steps}
\]
to an $\epsilon$-optimal point satisfying $g(y_{k})-g(\hat{y})\le\epsilon$.
The bound is very pessimistic, and in practice, no more than 20-30
Newton steps are ever needed for convergence.

The most computationally expensive of Newton's method is the solution
of the Newton direction $\Delta y$ via the $m\times m$ system of
equations
\begin{equation}
\nabla^{2}g(y_{k})\Delta y=-\nabla g(y_{k}).\label{eq:newton_sys-1}
\end{equation}
The main result in this section is a proof that the condition number
of $\nabla^{2}g(y)$ is independent of the problem dimension $n$.

\condbound*

As a consequence, each Newton direction can be computed in $O(\log\epsilon^{-1})$
iterations using conjugate gradients, over $O(\log\log\epsilon^{-1})$
total Newton steps. The overall minimization problem is solved to
$\epsilon$-accuracy in
\[
O(\log\epsilon^{-1}\log\log\epsilon^{-1})\approx O(1)\text{ CG iterations.}
\]
The leading constant here is dependent polynomially on the problem
data and the quality of the initial point, but \emph{independent of
the problem dimensions}.

\subsection{Preliminaries}

We endow $\S_{V}^{n}$ with the usual matrix inner product $X\bullet Y=\tr XY$
and associated Euclidean norm $\|X\|_{F}^{2}=X\bullet X$. The projection
\[
P_{V}:\S^{n}\to\S_{V}^{n}\qquad P_{V}(X)=\arg\min_{Y\in\S_{V}^{n}}\|X-Y\|_{F}^{2}
\]
is the associated projection from $\S^{n}$ onto $\S_{V}^{n}$. 

Let $\S_{+}^{n}\subset\S^{n}$ be the set of positive semidefinite
matrices. (We will frequently write $X\succeq0$ to mean $X\in\S_{+}^{n}$.)
We define the cone of \emph{sparse positive semidefinite matrices},
\[
\K=\S_{+}^{n}\cap\S_{V}^{n}
\]
and its dual cone, the cone of \emph{sparse matrices with positive
semidefinite completions,}
\[
\K_{*}=\{S\bullet X\ge0:S\in\S_{V}\}=P_{V}(\S_{+}^{n}).
\]
Being dual cones, $\K$ and $\K_{*}$ satisfy Farkas' lemma.
\begin{lem}[Farkas' lemma]
Given an arbitrary $Y\in\S_{V}^{n}$
\begin{enumerate}
\item Either $Y\in\K$, or there exists a separating hyperplane $S\in\K^{*}$
such that $S\bullet Y<0$.
\item Either $Y\in\K_{*}$, or there exists a separating hyperplane $X\in\K$
such that $Y\bullet X<0$.
\end{enumerate}
\end{lem}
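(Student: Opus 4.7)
The plan is to derive both parts of Farkas' lemma as straightforward consequences of the hyperplane separation theorem, exploiting the fact that $\K$ and $\K_{*}$ are dual closed convex cones in the finite-dimensional Hilbert space $(\S_{V}^{n}, \bullet)$. The only real work is verifying closure and convexity so that strict separation applies, after which part~1 and part~2 fall out by the symmetry of cone-duality.

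First I would check the structural preliminaries: $\K = \S_{+}^{n}\cap\S_{V}^{n}$ is a closed convex cone, being the intersection of the closed convex cone $\S_{+}^{n}$ with the subspace $\S_{V}^{n}$. Reading $\K_{*}$ through its dual-cone description $\K_{*}=\{S\in\S_{V}^{n}:S\bullet X\ge0 \text{ for all } X\in\K\}$ (the other identification $\K_{*}=P_{V}(\S_{+}^{n})$ stated in the text is not needed here), it is an intersection of closed half-spaces indexed by $\K$, hence automatically a closed convex cone. By the bipolar theorem in a Hilbert space, $(\K_{*})^{*}=\K^{**}=\K$, so the roles of $\K$ and $\K_{*}$ are fully symmetric; consequently it suffices to prove part~1, and part~2 follows by swapping $(\K,\K_{*})$ in the same argument.

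For part~1, suppose $Y\in\S_{V}^{n}$ with $Y\notin\K$. Since $\K$ is closed and convex and $\{Y\}$ is compact, the strict hyperplane separation theorem yields $S\in\S_{V}^{n}$ and $\alpha\in\R$ with $S\bullet Y<\alpha\le S\bullet X$ for every $X\in\K$. The cone property of $\K$ then forces $S\bullet X\ge0$ for every $X\in\K$: otherwise there would exist $X_{0}\in\K$ with $S\bullet X_{0}<0$, and the rays $tX_{0}\in\K$ for $t\to\infty$ would drive $S\bullet(tX_{0})\to-\infty$, contradicting the uniform lower bound $\alpha$. Hence $S\in\K^{*}=\K_{*}$. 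Since $0\in\K$, setting $X=0$ in the inequality yields $\alpha\le 0$, so $S\bullet Y<\alpha\le 0$, establishing the separating hyperplane. Part~2 is then immediate by replaying the argument with $\K$ and $\K_{*}$ interchanged, using $(\K_{*})^{*}=\K$ from the bipolar theorem.

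The only conceivable obstacle is ensuring that $\K_{*}$ is closed so that the bipolar theorem applies symmetrically; this is handled for free by taking the dual-cone definition (an intersection of closed half-spaces) as primary, rather than the image characterization $P_{V}(\S_{+}^{n})$, which as the image of a closed cone under a linear map would not automatically be closed in general. Everything else is textbook convex geometry, so the proof should be short and self-contained.
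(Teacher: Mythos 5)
Your proof is correct, and it is essentially the argument the paper intends but never writes down: the paper states the lemma with no proof at all, saying only that ``being dual cones, $\K$ and $\K_{*}$ satisfy Farkas' lemma,'' and your separation-plus-bipolar argument is precisely the textbook content of that remark. The one point worth tightening is your treatment of the two descriptions of $\K_{*}$. You take the dual-cone description $\{S\in\S_{V}^{n}:S\bullet X\ge0\ \forall X\in\K\}$ as primary and set aside the identification $\K_{*}=P_{V}(\S_{+}^{n})$, on the (generally valid) grounds that linear images of closed cones need not be closed. But elsewhere in the appendix the paper works with $\K_{*}=P_{V}(\S_{+}^{n})$ — it is the cone on which $f_{*}$ is defined, namely the matrices admitting positive semidefinite completions — so for the lemma to be applied in that setting one still needs the two descriptions to coincide. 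Fortunately, the pathology you worry about does not occur here: the pattern $V$ contains the diagonal, so if $P_{V}(Z_{k})\to S$ with $Z_{k}\succeq0$, the diagonals of $Z_{k}$ converge and hence all entries are bounded via $|(Z_{k})_{ij}|\le\sqrt{(Z_{k})_{ii}(Z_{k})_{jj}}$; a subsequential limit then gives a positive semidefinite completion of $S$. Thus $P_{V}(\S_{+}^{n})$ is closed, and since its dual cone is easily computed to be $\K$, the bipolar theorem identifies it with $\K^{*}$. With that one supplementary observation your proof establishes the lemma in exactly the form the paper uses it; without it, the proof applies to a cone that is only asserted, not shown, to be the paper's $\K_{*}$.
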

The following barrier functions on $\K$ and $\K_{*}$ were introduced
by Dahl, Andersen and Vandenberghe~\cite{dahl2008covariance,andersen2013logarithmic,Andersen2010}:
\[
f(X)=-\log\det X,\qquad f_{*}(S)=-\min_{X\in\S_{V}^{n}}\{S\bullet X-\log\det X\}.
\]
The gradients of $f$ are simply the projections of their usual values
onto $\S_{V}^{n}$, as in
\begin{gather*}
\nabla f(X)=-P_{V}(X^{-1}),\qquad\nabla^{2}f(X)[Y]=P_{V}(X^{-1}YX^{-1}),
\end{gather*}
Then we also have for any $S\in\K_{*}$
\[
f_{*}(S)=n+\log\det X,\qquad\nabla f(S)=-X,\qquad\nabla^{2}f(S)[Y]=\nabla^{2}f(X)^{-1}[Y],
\]
where $X\in\K$ is the unique matrix satisfying $P_{V}(X^{-1})=S$.
Dahl, Andersen and Vandenberghe~\cite{dahl2008covariance,andersen2013logarithmic,Andersen2010}
showed that all six operations defined above can be efficiently evaluated
in $O(n)$ time on a \emph{sparse }and \emph{chordal} sparsity pattern
$V$.

From the above, we see that the Hessian matrix $\nabla^{2}g(y)$ can
be written it terms of the Hessian $\nabla^{2}f(X)$ and the unique
$X\in\K$ satisfying $P_{V}(X^{-1})=C-A(y)$, as in
\begin{align*}
\nabla^{2}g(y) & =A^{T}(\nabla^{2}f(X)^{-1}[A(y)])=\A^{T}\nabla^{2}f(X)^{-1}\A,
\end{align*}
in which $\A=[\vector A_{1},\ldots,\vector A_{m}]$. Moreover, using
the theory of Kronecker products, we can write
\[
\vector\nabla^{2}f(X)[Y]=Q^{T}(X^{-1}\otimes X^{-1})Q\vector Y
\]
in which the $\frac{1}{2}n(n+1)\times|V|$ matrix $Q$ is the orthogonal
basis matrix of $\S_{V}^{n}$ in $\S^{n}$. Because of this, we see
that the eigenvalues of the Hessian $\nabla^{2}g(y)$ are bound
\[
\lambda_{\min}(\A^{T}\A)\lambda_{\min}^{2}(X^{-1})\le\lambda_{i}(\nabla^{2}g(y))\le\lambda_{\max}(\A^{T}\A)\lambda_{\max}^{2}(X^{-1}),
\]
and therefore its condition number is bound by the eigenvalues of
$X$
\begin{equation}
\cond(\nabla^{2}g(y))\le\cond(\A^{T}\A)\left(\frac{\lambda_{\max}(X)}{\lambda_{\min}(X)}\right)^{2}=\left(\frac{\lambda_{\max}(X)}{\lambda_{\min}(X)}\right)^{2}.\label{eq:bnd_g_X}
\end{equation}
The second equality arises here because $A$ is orthogonal by construction;
given that it is defined to satisfy $A(A^{T}(X))=P_{F}(X)$ for some
sparsity pattern $F\subset V$, we must have $A^{T}(A(y))=y$ for
all $y\in\R^{m}$. Consequently most of our effort will be expended
in bounding the eigenvalues of $X$.

\subsection{Proof of Theorem~\ref{thm:cond_bound}}

To simplify notation, we will write $y_{0}$, $\hat{y}$, and $y$
as the initial point, the solution, and any $k$-th iterate. From
this, we define $S_{0},$ $\hat{S},$ $S,$ with each satisfying $S=C-A(y)$,
and $X_{0},$ $\hat{X},$ and $X$, the points in $\K$ each satisfying
$P_{V}(X^{-1})=S$. 

Our goal is to bound the extremal eigenvalues of $X$. To do this,
we first recall that the sequence is monotonously decreasing by hypothesis,
as in
\[
g(\hat{y})\le g(y)\le g(y_{0}).
\]
Evaluating each $f_{*}(S)$ as $n+\log\det X$, yields
\begin{equation}
\log\det\hat{X}\le\log\det X\le\log\det X_{0}.\label{eq:subopt_bound}
\end{equation}

Next, we introduce the following function, which often appears in
the study of interior-point methods
\[
\phi(M)=\tr M-\log\det M-n\ge0,
\]
and is well-known to provide a control on the arthmetic and geometric
means of the eigenvalues of $M$. Indeed, the function attains its
unique minimum at $\phi(I)=0$, and it is nonnegative precisely because
of the arithmetic-geometric inequality. Let us show that it can also
bound the arithmetic-geometric means of the extremal eigenvalues of
$M$. 
\begin{lem}
Denote the $n$ eigevalues of $M$ as $\lambda_{1}\ge\cdots\ge\lambda_{n}$.
Then
\[
\phi(M)\ge\lambda_{1}+\lambda_{n}-2\sqrt{\lambda_{1}\lambda_{n}}=(\sqrt{\lambda_{1}}-\sqrt{\lambda_{n}})^{2}.
\]
\end{lem}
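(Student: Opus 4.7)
The plan is to exploit the fact that $\phi$ decouples across the spectrum. Writing $\phi(M) = \sum_{i=1}^n (\lambda_i - \log \lambda_i - 1)$ and introducing the scalar function $h(t) = t - \log t - 1$, I first note that $h(t) \ge 0$ for all $t > 0$ (this is the scalar Klein / arithmetic--geometric inequality, visible from $h(1)=0$, $h'(t) = 1 - 1/t$, and the convexity of $-\log$). In particular, every summand is nonnegative, so I can discard the interior eigenvalues and retain only the extremes:
\[
\phi(M) \;=\; \sum_{i=1}^n h(\lambda_i) \;\ge\; h(\lambda_1) + h(\lambda_n) \;=\; \lambda_1 + \lambda_n - \log(\lambda_1 \lambda_n) - 2.
\]

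The remaining task is to show $\lambda_1 + \lambda_n - \log(\lambda_1 \lambda_n) - 2 \ge \lambda_1 + \lambda_n - 2\sqrt{\lambda_1 \lambda_n}$, which after cancellation reduces to
\[
2\sqrt{\lambda_1 \lambda_n} - \log(\lambda_1 \lambda_n) - 2 \;\ge\; 0.
\]
Substituting $t = \sqrt{\lambda_1 \lambda_n}$ so that $\log(\lambda_1 \lambda_n) = 2\log t$, the inequality becomes $2(t - \log t - 1) = 2\,h(t) \ge 0$, which is precisely the scalar inequality I already invoked. This closes the chain, and the final identity $\lambda_1 + \lambda_n - 2\sqrt{\lambda_1 \lambda_n} = (\sqrt{\lambda_1} - \sqrt{\lambda_n})^2$ is just expansion of a square.

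There is no real obstacle here; the only judgement call is the observation that dropping the interior eigenvalues is legitimate precisely because $h \ge 0$ on the positive reals, and that the leftover geometric-mean term $-\log(\lambda_1\lambda_n) - 2$ can itself be bounded below by $-2\sqrt{\lambda_1\lambda_n}$ using \emph{the same} scalar inequality applied to $t = \sqrt{\lambda_1\lambda_n}$. So the proof is essentially a two-line reduction to the one-variable fact $h \ge 0$, applied once to each endpoint eigenvalue and once more to their geometric mean.
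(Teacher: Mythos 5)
Your proof is correct and follows essentially the same route as the paper's: drop the interior eigenvalues using $t-\log t-1\ge 0$, then absorb the $-\log(\lambda_1\lambda_n)-2$ term by applying the same scalar inequality at $t=\sqrt{\lambda_1\lambda_n}$. Nothing further to add.
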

\begin{proof}
Noting that $x-\log x-1\ge0$ for all $x\ge0$, we have
\begin{align*}
\phi(M) & =\sum_{i=1}^{n}(\lambda_{i}-\log\lambda_{i}-1)\\
 & \ge(\lambda_{1}-\log\lambda_{1}-1)+(\lambda_{n}-\log\lambda_{n}-1)\\
 & =\lambda_{1}+\lambda_{n}-2\log\sqrt{\lambda_{1}\lambda_{n}}-2\\
 & =\lambda_{1}+\lambda_{n}-2\sqrt{\lambda_{1}\lambda_{n}}+2(\sqrt{\lambda_{1}\lambda_{n}}-\log\sqrt{\lambda_{1}\lambda_{n}}-1)\\
 & \ge\lambda_{1}+\lambda_{n}-2\sqrt{\lambda_{1}\lambda_{n}}.
\end{align*}
Completing the square yields $\phi(M)\ge(\sqrt{\lambda_{1}}-\sqrt{\lambda_{n}})^{2}$.
\end{proof}
The following upper-bounds are the specific to our problem, and are
the key to our intended final claim.
\begin{lem}
Define the initial suboptimality $\phi_{\max}=\log\det X_{0}-\log\det\hat{X}$.
Let $\nabla g(y)^{T}(y-y_{0})\le\phi_{\max}$. Then we have
\[
\phi(\hat{X}X^{-1})\le\phi_{\max},\qquad\phi(XX_{0}^{-1})\le2\phi_{\max}.
\]
\end{lem}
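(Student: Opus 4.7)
My plan is to compute both $\phi$-quantities directly using the identities $g(y) = f_*(C-A(y))$, $X = -\nabla f_*(S)$, and $P_V(X^{-1}) = S = C - A(y)$, and recognize the resulting expressions as combinations of $g$-suboptimality and the gradient inner product. The only nontrivial algebraic manipulation involves carefully handling the trace terms using the fact that $X, \hat{X}, X_0$ all lie in $\S_V^n$, so that their inner product with any matrix is unchanged by the projection $P_V$.

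First, I would expand
\[
\phi(\hat{X}X^{-1}) = \tr(\hat{X}X^{-1}) - \log\det\hat{X} + \log\det X - n.
\]
The key calculation is to show $\tr(\hat{X}X^{-1}) = n$. Since $\hat{X}\in\S_V^n$ and $X^{-1}-P_V(X^{-1})$ is orthogonal to $\S_V^n$, we have $\hat{X}\bullet X^{-1} = \hat{X}\bullet P_V(X^{-1}) = \hat{X}\bullet S = \hat{X}\bullet C - y^T A^T(\hat{X})$. Since $\hat{X}$ is primal optimal we have $A^T(\hat{X})=0$, so $\hat{X}\bullet S = \hat{X}\bullet C$. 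Applying the same identity at $y=\hat{y}$ (where it becomes $\hat{X}\bullet\hat{X}^{-1} = n$) fixes the constant $\hat{X}\bullet C = n$. Since $g(y) = f_*(S)$ differs from $\log\det X$ only by an additive constant, $\log\det X - \log\det\hat{X} = g(y) - g(\hat{y})$, which together with the descent hypothesis $g(y)\le g(y_0)$ yields $\phi(\hat{X}X^{-1})\le\phi_{\max}$.

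For the second bound, the same expansion
\[
\phi(XX_0^{-1}) = \tr(XX_0^{-1}) - \log\det X + \log\det X_0 - n
\]
now runs into the obstruction that $A^T(X)$ is not zero in general. However, since $\nabla g(y) = A^T(-\nabla f_*(S)) \cdot(-1) = A^T(X)$, the cross-term can still be identified: $X\bullet X_0^{-1} = X\bullet S_0 = X\bullet C - y_0^T\nabla g(y)$, and combining with $X\bullet S = X\bullet X^{-1} = n$ yields $X\bullet C = n + y^T\nabla g(y)$, hence $\tr(XX_0^{-1}) = n + (y-y_0)^T\nabla g(y)$. Substituting back and using $\log\det X_0 - \log\det X = g(y_0)-g(y)$ gives
\[
\phi(XX_0^{-1}) = (y-y_0)^T\nabla g(y) + \bigl(g(y_0)-g(y)\bigr).
\]
The first term is bounded by $\phi_{\max}$ by hypothesis, and the second by $g(y_0)-g(\hat{y})=\phi_{\max}$ since $g(y)\ge g(\hat{y})$, giving $\phi(XX_0^{-1})\le 2\phi_{\max}$.

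The main obstacle is really bookkeeping rather than conceptual: one must be careful that the projection onto $\S_V^n$ acts transparently on inner products with elements of $\S_V^n$, and that the identification $\nabla g(y) = A^T(X)$ (together with primal feasibility $A^T(\hat{X})=0$) is what makes both computations collapse. Once these are in place, the proof is an essentially two-line calculation in each case.
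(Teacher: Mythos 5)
Your proof is correct and follows essentially the same route as the paper's: both arguments reduce each $\phi$-value to a $g$-suboptimality plus (for the second bound) the gradient inner product, using $A^{T}(\hat{X})=0$, $\nabla g(y)=A^{T}(X)$, the transparency of $P_{V}$ on inner products with matrices in $\S_{V}^{n}$, and the identity $f_{*}(S)=n+\log\det X$ together with the descent property $g(\hat{y})\le g(y)\le g(y_{0})$. The only cosmetic difference is that you fix the constant $\hat{X}\bullet C=n$ by evaluating the same identity at $y=\hat{y}$, where the paper substitutes $C=P_{V}(\hat{X}^{-1})+A(\hat{y})$ directly; these are the same computation.
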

\begin{proof}
To prove the first inequality, we take first-order optimality at the
optimal point $\hat{y}$
\[
\nabla g(\hat{y})=A^{T}(\hat{X})=0.
\]
Noting that $\hat{X}\in\S_{V}^{n}$, we further have
\begin{align*}
X^{-1}\bullet\hat{X}=P_{V}(X^{-1})\bullet\hat{X} & =[C-A(y)]\bullet\hat{X}=C\bullet\hat{X}-y^{T}\underbrace{A^{T}(\hat{X})}_{=0}\\
 & =[P_{V}(\hat{X}^{-1})+A(\hat{y})]\bullet\hat{X}\\
 & =P_{V}(\hat{X}^{-1})\bullet\hat{X}=n
\end{align*}
and hence $\phi(X^{-1}\hat{X})$ has the value of the suboptimality
at $X$, which is bound by the initial suboptimality in (\ref{eq:subopt_bound}):
\begin{align*}
\phi(X^{-1}\hat{X}) & =\underbrace{X^{-1}\bullet\hat{X}}_{n}-\log\det X^{-1}\hat{X}-n\\
 & =\log\det X-\log\det\hat{X}\\
 & \le\log\det X_{0}-\log\det\hat{X}=\phi_{\max}.
\end{align*}
We begin with the same steps to prove the second inequality:
\begin{align*}
X_{0}^{-1}\bullet X=P_{V}(X_{0}^{-1})\bullet X & =[C-A(y_{0})]\bullet X\\
 & =[P_{V}(X^{-1})+A(\hat{y})]\bullet X-A(y_{0})\bullet X\\
 & =n+A(y-y_{0})\bullet X.
\end{align*}
Note that $\nabla g(y)^{T}(y-y_{0})=(y-y_{0})^{T}A^{T}(X)=A(y-y_{0})\bullet X\le\phi_{\max}.$
Substituting and applying (\ref{eq:subopt_bound}) yields
\begin{align*}
\phi(X_{0}^{-1}X) & =X_{0}^{-1}\bullet X-\log\det X_{0}^{-1}X-n\\
 & =(n+A(y-y_{0})\bullet X-\log\det X_{0}^{-1}X)-n\\
 & =\underbrace{\log\det X_{0}X^{-1}}_{\le\phi_{\max}}+\underbrace{A(y-y_{0})\bullet X}_{\le\phi_{\max}}.
\end{align*}
\end{proof}
Using the two upper-bounds to bound the eigenvalues of their arguments
is enough to derive a condition number bound on $X$, which immediately
translates into a condition number bound on $\nabla^{2}g(y)$. 
\begin{proof}[Proof of Theorem~\ref{thm:cond_bound}]
To prove the first bound (\ref{eq:thm_bnd1}), we will instead prove
\begin{equation}
\frac{\lambda_{\max}(X)}{\lambda_{\min}(X)}\le2+\frac{2\phi_{\max}^{2}\lambda_{\max}(X_{0})}{\lambda_{\min}(\hat{X})},\label{eq:thm_bnd3}
\end{equation}
which yields the desired condition number bound on $\nabla^{2}g(y)$
by substituting into (\ref{eq:bnd_g_X}). Writing $\lambda_{1}=\lambda_{\max}(X)$
and $\lambda_{n}=\lambda_{\min}(X)$, we have from the two lemmas
above:
\begin{align*}
\phi_{\max} & \ge\lambda_{\min}(\hat{X})(\sqrt{\lambda_{n}^{-1}}-\sqrt{\lambda_{1}^{-1}})^{2}>0,\\
2\phi_{\max} & \ge\lambda_{\min}(X_{0}^{-1})(\sqrt{\lambda_{1}}-\sqrt{\lambda_{n}})^{2}>0.
\end{align*}
Multiplying the two upper-bounds and substituing $\lambda_{\min}(X_{0}^{-1})=1/\lambda_{\max}(X_{0})$
yields
\[
\frac{2\phi_{\max}^{2}\lambda_{\max}(X_{0})}{\lambda_{\min}(\hat{X})}\ge\left(\sqrt{\frac{\lambda_{1}}{\lambda_{n}}}-\sqrt{\frac{\lambda_{n}}{\lambda_{1}}}\right)^{2}=\frac{\lambda_{1}}{\lambda_{n}}+\frac{\lambda_{n}}{\lambda_{1}}-2.
\]
Finally, bounding $\lambda_{n}/\lambda_{1}\ge0$ yields (\ref{eq:thm_bnd3}). 
\end{proof}

\end{document}